\definecolor{StableBlueDark}{RGB}{120, 185, 220}
\definecolor{darkpastelgreen}{rgb}{0.01, 0.75, 0.24}
\definecolor{electriccrimson}{rgb}{1.0, 0.0, 0.25}
\definecolor{navyblue}{rgb}{0.0, 0.0, 0.75}
\definecolor{FrontiersBlue}{RGB}{55, 110, 180}
\definecolor{MSRed}{HTML}{F25022}
\definecolor{MSGreen}{HTML}{7FBA00}
\definecolor{MSBlue}{HTML}{00A4EF}
\definecolor{MSYellow}{HTML}{FFB900}
\definecolor{StableBlueStrong}{RGB}{30, 110, 180}
\newcommand{\tc}[1]{{\color{blue}#1}}
\newcommand{\Var}{\mathrm{Var}}
\newcommand{\gdsq}{g_{\mathrm{DSQ}}}
\newcommand{\sech}{\mathrm{sech}}
\theoremstyle{plain}
\newtheorem{theorem}{Theorem}[section]
\newtheorem{lemma}[theorem]{Lemma}
\theoremstyle{definition}
\newcommand{\algacro}{StableQAT} 
\newcommand{\algacrofourier}{RDFS}
\newcommand{\cmark}{\textcolor{green!70!black}{\small\ding{51}}}
\newcommand{\xmark}{\textcolor{red}{\small\ding{55}}}
\newcommand{\paperabstract}{Quantization-aware training (QAT) is essential for deploying large models under strict memory and latency constraints, yet achieving stable and robust optimization at ultra-low bitwidths remains challenging. Common approaches based on the straight-through estimator (STE) or soft quantizers often suffer from gradient mismatch, instability, or high computational overhead. As such, we propose \algacro{}, a unified and efficient QAT framework that stabilizes training in ultra low-bit settings via a novel, lightweight, and theoretically grounded surrogate for backpropagation derived from a discrete Fourier analysis of the rounding operator. \algacro{} strictly generalizes STE as the latter arises as a special case of our more expressive surrogate family, yielding smooth, bounded, and inexpensive gradients that improve QAT training performance and stability across various hyperparameter choices. In experiments, \algacro{} exhibits stable and efficient QAT at 2-4 bit regimes, demonstrating improved training stability, robustness, and superior performance with negligible training overhead against standard QAT techniques.}
\newif\iffrontiersstyle
\newcommand{\microsoftlogo}{%
\raisebox{-0.2ex}{\begin{tikzpicture}[x=0.95ex,y=0.95ex]
\fill[MSRed] (0,1.1) rectangle (1,2.1);
\fill[MSGreen] (1.15,1.1) rectangle (2.15,2.1);
\fill[MSBlue] (0,0) rectangle (1,1);
\fill[MSYellow] (1.15,0) rectangle (2.15,1);
\end{tikzpicture}}%
}
\newcommand{\frontiersheader}{%
\noindent\begin{minipage}{\textwidth}
\small
\newcommand{\instlogoheight}{2.2ex}
\newcommand{\instlogogap}{0.9em}
\begin{tabular*}{\textwidth}{@{\extracolsep{\fill}} p{0.84\textwidth} r}
\raggedright\textbf{\microsoftlogo\hspace{0.4em}\raisebox{0.12ex}{Microsoft}}\hspace{\instlogogap}\raisebox{-0.2ex}{\includegraphics[height=\instlogoheight]{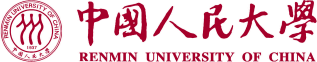}}\hspace{\instlogogap}\raisebox{-0.2ex}{\includegraphics[height=\instlogoheight]{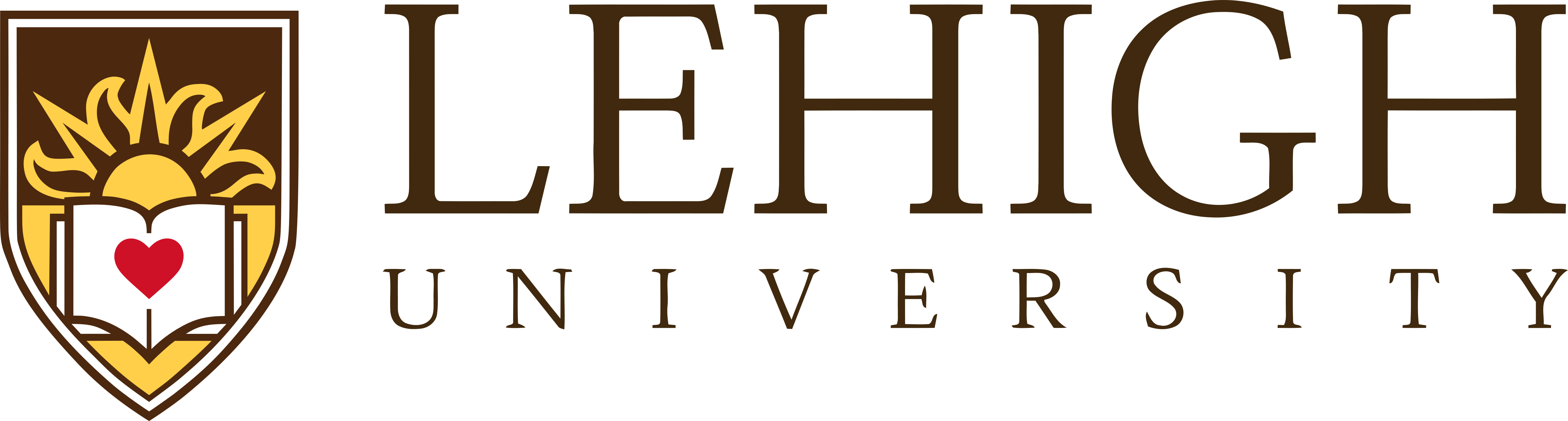}}\hspace{\instlogogap}\raisebox{-0.2ex}{\includegraphics[height=\instlogoheight]{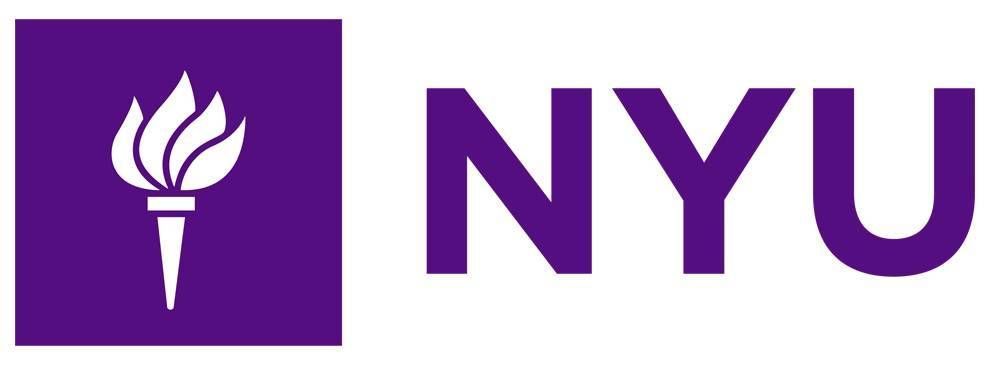}} & 2026-02-17
\end{tabular*}
\end{minipage}\vspace{-0.15em}
}
\newcommand{\frontierslinks}{%
\begin{center}
\small
\raisebox{-0.15ex}{\includegraphics[height=1.7ex]{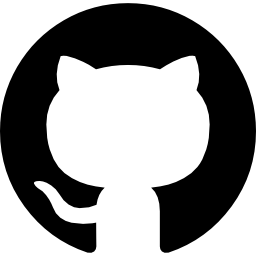}}\hspace{0.35em}\href{https://github.com/microsoft/StableQAT}{\textcolor{navyblue}{\texttt{{https://github.com/microsoft/StableQAT}}}}
\end{center}
}
\newcommand{\frontierssummarybox}{%
\begin{tcolorbox}[
  colback=FrontiersBlue!8,
  colframe=FrontiersBlue!22,
  boxrule=0pt,
  arc=2.8mm,
  left=2.2mm,
  right=2.2mm,
  top=1.2mm,
  bottom=1.2mm
]
\small
% \textbf{Summary.} 
\paperabstract
\end{tcolorbox}
}
\icmltitlerunning{\algacro{}: Stable, High-Performing, Efficient QAT}
\renewcommand{\printAffiliationsAndNotice}[1]{\global\icml@noticeprintedtrue%
  \stepcounter{@affiliationcounter}%
  {\let\thefootnote\relax\footnotetext{\hspace*{-\footnotesep}\ificmlshowauthors #1\fi%
      \forloop{@affilnum}{1}{\value{@affilnum} < \value{@affiliationcounter}}{
        \ifnum\value{@affilnum}>1\hspace{0.9em}\fi
        \textsuperscript{\arabic{@affilnum}}\ifcsname @affilname\the@affilnum\endcsname%
          \csname @affilname\the@affilnum\endcsname%
        \else
          {\bf AUTHORERR: Missing \textbackslash{}icmlaffiliation.}
        \fi
      }\hspace{0.9em}(*) Core contributors.%
      \ifdefined\icmlcorrespondingauthor@text
        \par\smallskip\noindent\hspace*{1.2em}Correspondence to: \icmlcorrespondingauthor@text.
      \else
        {\bf AUTHORERR: Missing \textbackslash{}icmlcorrespondingauthor.}
      \fi
    }
  }
}
\begin{document}

\onecolumn

  \iffrontiersstyle
  \frontiersheader
  \fi

  \icmltitle{
  \resizebox{0.98\textwidth}{!}{\algacro{}: Stable Quantization-Aware Training at Ultra-Low Bitwidths}
  }

  \icmlsetsymbol{core}{*}

  \begin{icmlauthorlist}
    \resizebox{0.975\textwidth}{!}{\mbox{%
      \icmlauthor{Tianyi Chen}{core,msft}
      \icmlauthor{Sihan Chen}{core,renmin}
      \icmlauthor{Xiaoyi Qu}{core,lehigh}
      \icmlauthor{Dan Zhao}{core,nyu}
      \icmlauthor{Ruomei Yan}{msft}
      \icmlauthor{Jongwoo Ko}{msft}
      \icmlauthor{Luming Liang}{msft}
      \icmlauthor{Pashmina Cameron}{msft}
    }}
    \end{icmlauthorlist}

  \icmlaffiliation{msft}{Microsoft}
  \icmlaffiliation{lehigh}{Lehigh University}
  \icmlaffiliation{nyu}{New York University}
  \icmlaffiliation{renmin}{Renmin University of China}

  \icmlcorrespondingauthor{Tianyi Chen}{\href{mailto:Tianyi.Chen@microsoft.com}{\texttt{Tianyi.Chen@microsoft.com}}}

  \iffrontiersstyle
  \frontierslinks
  \frontierssummarybox
  \fi

  \vskip 0.3in

\printAffiliationsAndNotice{}

\iffrontiersstyle
\else
\begin{abstract}
\paperabstract
\end{abstract}
\fi

\section{Introduction}

% \begin{figure}
%     \centering
%     \fbox{
%         \begin{minipage}[c][0.35\linewidth][c]{0.9\linewidth}
%             \centering
%             \vspace{0.5em}
%             {\large \textbf{Placeholder Figure}}\\[0.5em]
%             {\small Performance across Different Models and Methods}
%             \vspace{0.5em}
%         \end{minipage}
%     }
%     \caption{Performance across Different Models and Methods.}
%     \label{fig:performance_overview}
% \end{figure}

Large language models (LLMs) are increasingly being deployed under strict constraints on memory bandwidth, energy consumption, and hardware throughput, making full-precision inference impractical at scale. As a result, model quantization has become a key tool for efficient deployment. Although post-training quantization (PTQ) achieves competitive accuracy at 8-bit precision, its performance degrades sharply below 4 bits due to heterogeneous data distributions~\citep{ding2023efficiency} among other challenges, motivating the use of quantization-aware training (QAT) \cite{frantar2023gptq, xiao2023smoothquant, liu2024llmqat} instead.
QAT addresses this limitation by mimicking the errors experienced by the model during inference in training, such as discretization, computation loss, operator fusion/graph-level transformation loss, allowing the model to learn everything via backpropagation \cite{jacob2018integer}.

\textbf{Quantization-aware training (QAT)} exhibits increased optimization fragility as target bitwidths decrease, with both stability and accuracy becoming difficult to maintain at and below 4-bit precision \cite{du2024bitdistiller, panferov2025quest}. Recent studies attribute this to multiple interacting factors, including outlier-dominated distributions, sensitivity to quantizer scaling and clipping, loss plateaus, lack of trainable parameters in standard activation functions, and accumulating approximation errors across layers \cite{choi2018pact, esser2019lsq, beyondoutliers2025}. In spite of the many attempts to mitigate these from different angles \citep{dsmm2024, efficientqat2025}, low-bit QAT for LLMs often remains unstable and yields suboptimal performance; a key underlying reason is that existing approaches are unable to effectively resolve the fundamental mismatch between the non-differentiable quantization operator in the forward pass and continuous gradient-based optimization in the backward pass.

\begin{figure*}[t]
    \centering
\includegraphics[width=0.95\linewidth]{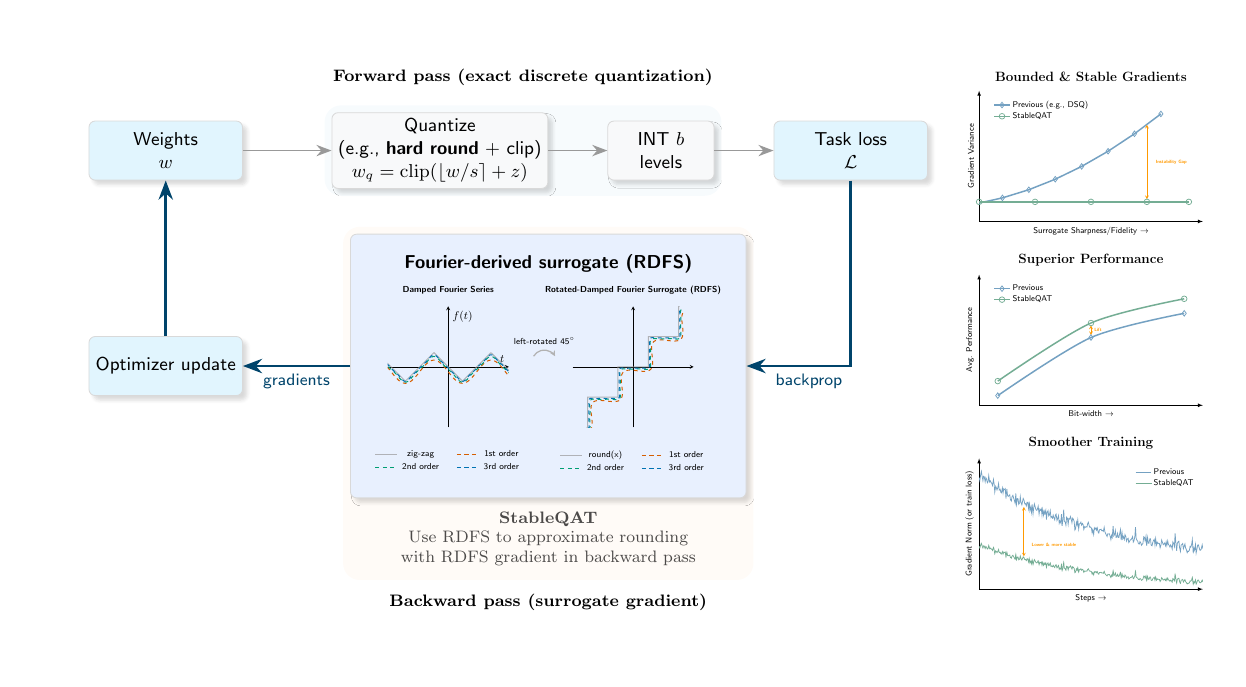}
\vspace{-2em}
    \caption{\textbf{\algacro{}:} a rotated damped Fourier surrogate (RDFS) provides stable and bounded gradients during backpropagation.}
    \label{fig:qatflow}
\end{figure*}

\paragraph{Forward-Backward Mismatch in QAT.} A central challenge in QAT arises from the rounding operator: exact quantization relies on hard rounding to map continuous values to discrete levels but the rounding function is non-differentiable with zero gradient almost everywhere.
To enable backpropagation, the straight-through estimator (STE) is commonly adopted as a surrogate \cite{bengio2013ste, hubara2018quantized}, but STE introduces significant approximation error in the optimization direction, which often leads to unstable training, especially at low bitwidths. Recent approaches address this mismatch by introducing differentiable or stochastic relaxations of quantization, including noise-based training, soft rounding, and smooth approximations that gradually anneal to discrete behavior \cite{gong2019dsq, zhong2023mat, diffq2022, sdq2022,semenov2025smooth}. In attempting to smoothen the optimization process, these methods typically incur additional computational overhead, introduce extra variance, or require careful scheduling to balance fidelity to discrete inference against optimization stability.

\begin{wraptable}{r}{0.52\textwidth}
\vspace{-1.0em}
\centering
\scriptsize
\caption{Qualitative comparison of QAT methods.
% StableQAT (ours) uniquely preserves exact discrete quantization in the forward pass while employing a theoretically grounded surrogate gradient in the backward pass, explicitly decoupling inference fidelity from optimization smoothness.
% In contrast to DSQ and ParetoQ, StableQAT yields bounded gradient variance near hard rounding, avoids annealing or multi-stage schedules, and incurs minimal computational overhead. These properties collectively enable stable optimization in extreme low-bit regimes (2-4 bits)
}
\begin{tabular}{lccc}
\toprule
\textbf{Characteristic} & \textbf{StableQAT} & \textbf{DSQ} & \textbf{STE} \\
\midrule
Theoretically grounded surrogate & \cellcolor{yellow!20}{\cmark} & \cmark & \xmark \\
Computationally lightweight  & \cellcolor{yellow!20}{\cmark} & \xmark & \cmark \\
Low-bit training \& performance stability & \cellcolor{yellow!20}{\cmark} & \xmark & \xmark \\
\bottomrule
\end{tabular}
\vspace{-1.2em}
\end{wraptable}
\paragraph{Contributions.}
We introduce \textbf{\algacro{}}, a simple but flexible and effective framework to address the optimization bottlenecks of QAT that can be seamlessly integrated into existing training pipelines.
% We studied the backward surrogates and integrate \algacro{} seamlessly into existing training pipelines. 
\algacro{} introduces a novel surrogate for the quantization backpropagation: the \textbf{rotated damped Fourier surrogate}(RDFS) models the rounding operator through its spectral structure, yielding a smooth and bounded optimization direction. As a result, \algacro{} provides a theoretically grounded and efficient plug-and-play solution to stabilize and boost QAT performance. Our main contributions are summarized as follows.

% \begin{itemize}[leftmargin=6pt]
%     \item \textbf{Rotated Damped Fourier Surrogate (RDFS).}
%     We propose a novel surrogate for the quantization operator by modeling quantization through Fourier analysis, applying a geometric rotation, and damping the amplitude, yielding a computationally efficient, smooth, bounded, and stable optimization behavior for QAT.

%     \item \textbf{Theoretical Analysis.}
%     We provide comprehensive theoretical insights into the properties of our proposed surrogate, including its approximation error and variance, revealing its optimality and advantages compared to STE and prior soft rounding alternatives.

%     \item \textbf{Robust Performance \& Stability.}
%     Extensive experiments demonstrate that \algacro{} achieves consistently improved training stability and performance at 2-4 bit precision, without incurring additional computational overhead compared to standard QAT.
% \end{itemize}

\begin{itemize}[leftmargin=6pt]
    \item \textbf{Rotated Damped Fourier Surrogate (RDFS).}
    We propose a novel surrogate for the quantization operator by modeling quantization through Fourier analysis, applying a geometric rotation, and damping the amplitude, yielding a computationally efficient, smooth, bounded, and stable optimization behavior for QAT with negligible implementation complexity.

    \item \textbf{Theoretical Analysis.}
    We provide comprehensive theoretical insights into the properties of our proposed surrogate, including its approximation error and variance, revealing its optimality and clear advantages over STE and prior soft rounding alternatives in terms of stability, conditioning and computational efficiency.

    \item \textbf{Robust Performance \& Stability.}
    Extensive experiments across Large Language Models (LLMs) and Vision Transformers (ViTs) demonstrate that \algacro{} achieves consistently improved training stability and performance at 2--4 bit precision across model scales, without incurring additional computational overhead compared to standard QAT.
\end{itemize}

\vspace{-0.5em}

\section{Preliminary and Related Work}\label{sec.preliminary}

This section reviews the necessary preliminaries and the most closely related works. Additional discussions on post-training quantization (PTQ) and other quantization-aware training (QAT) paradigms are deferred to Appendix~\ref{appendix:more_related_works}.

\subsection{Quantization}
\label{subsec:quantization}

Quantization maps high-precision floating values (e.g., FP32 or FP16) into low-precision discrete representations (e.g., INT8 or INT4), thereby reducing memory footprint and inference-time computational cost. Given a floating-point variable ${x}_\text{original}$ (e.g., weights or activations) and a target bit-width $b$, a quantizer maps ${x}_\text{original}$ to an integer domain $[q_{\min}, q_{\max}]$. For signed quantization, the range is typically $[-2^{b-1}, 2^{b-1}-1]$, and becomes $[0, 2^b - 1]$ for unsigned quantization. 

The quantization process is parameterized by a scaling factor $s \in \mathbb{R}^+$ and a zero-point $z \in \mathbb{Z}$, and is defined as:
\begin{equation}\label{eq:quant_forward}
\begin{split}
{x}_q &= \mathrm{clip}\left(\mathrm{round}({x}_\text{original}/s) + z, q_{\min}, q_{\max}\right) \\
&= \mathrm{clip}\left(\mathrm{round}({x}) + z, q_{\min}, q_{\max}\right),
\end{split}
\end{equation}
where $\mathrm{round}(\cdot)$ denotes the round-to-nearest-integer operator, and $\mathrm{clip}(v, a, b) = \min(\max(v, a), b)$ enforces the feasible range constraint. We denote ${x} \triangleq {x}_\text{original}/s$ for simplicity of notations. 
\begin{comment}
The corresponding dequantization approximation $\hat{{x}}$ is given by:
\begin{equation}
\hat{{x}} = s ({x}_q - z).
\label{eq:dequant}
\end{equation}    
\end{comment}
Here, the scaling factor $s$ (also referred to as the quantization step size) determines the quantization resolution, while the zero-point $z$ controls the alignment between the floating-point and integer domains. Without loss of generality, we assume $z = 0$ throughout the remainder of this paper for notational simplicity. We note that our analysis remains valid for for arbitrary $z$.

\subsection{Straight-Through Estimator (STE)}
\label{subsec:ste}

Training quantized networks poses a fundamental optimization challenge due to the non-differentiability of the rounding operation in Eq.~(\ref{eq:quant_forward}). In particular, the rounding function has zero derivative almost everywhere and is undefined at integer transition points. As a result, its first-order derivative carries no informative signal, causing standard backpropagation to fail with ineffective parameter updates under gradient-based optimization.

To address this, the Straight-Through-Estimator (STE) \cite{bengio2013estimating} has been widely adopted to provide a surrogate to the derivative. STE simply ignores the rounding operation during the backward pass and approximates the differential of the quantized value ${x}_q$ with respect to the input ${x}$ as an identity function. More formally, let $\mathcal{L}$ denote the task loss. The derivative of the loss with respect to the pre-quantized input ${x}$ is then approximated by STE as:
\begin{equation}
\frac{\partial \mathcal{L}}{\partial {x}}
= \frac{\partial \mathcal{L}}{\partial {x}_q} \cdot
{\fcolorbox{black}{white}{$\displaystyle
\frac{\partial {x}_q}{\partial {x}}
$}}_{\hspace{0.1em}\text{STE}}
\approx
\frac{\partial \mathcal{L}}{\partial {x}_q},
\label{eq:ste}
\end{equation}
where the true derivative $\partial {x}_q / \partial {x}$ is replaced by an identity surrogate. Although STE makes QAT practically feasible, it introduces substantial gradient mismatch due to the large deviation between the straight-through surrogate and the rounding operator. This mismatch injects significant optimization noise, leading to biased and unstable gradient updates, impaired convergence behavior, and ultimately sub-optimal performance, especially at ultra-low bitwidths.

\subsection{Soft Surrogate Quantization}

Soft surrogate quantization methods more directly target the rounding operation by introducing continuous relaxations. DSQ~\cite{gong2019dsq} is a representative approach that employs a parameterized \texttt{tanh} function whose sharpness is gradually increased during training. More recent work~\citep{semenov2025smooth} utilizes a \texttt{sigmoid} function to smooth the rounding operator in~\autoref{eq:quant_forward}.

While these approaches capture richer structural information of the quantization process, they still suffer from the inherently ill-conditioned optimization landscape induced by the rounding operator. Moreover, they rely on computationally expensive \texttt{exp}-based functions, significantly slowing down training, typically by \textbf{3$\times$--5$\times$} in both forward and backward passes (see \autoref{fig:train_loss_comparison}). Other soft quantization methods introduce additional relaxation regularizers; however, they continue to face similar optimization challenges stemming from the underlying rounding behavior.

% However, simply passing the gradient through is often insufficient when input values exceed the quantization clipping range. Therefore, a clipped version of STE is typically employed. 

% Formally, let $L$ denote the task loss. The gradient of the loss with respect to the pre-quantized input $\mathbf{x}$ is approximated as:
% \begin{equation}
%     \frac{\partial L}{\partial \mathbf{x}} \approx \frac{\partial L}{\partial \hat{\mathbf{x}}} \cdot \mathbb{I}(q_{\min} \le \frac{\mathbf{x}}{s} + z \le q_{\max}),
%     \label{eq:ste}
% \end{equation}
% where $\mathbb{I}(\cdot)$ is the indicator function. This formulation ensures that gradients are only propagated for values within the representable range of the quantizer, effectively treating the quantization function as a hard hyperbolic tangent non-linearity during back-propagation.

\section{\algacro{}}\label{sec:method}

\textbf{\algacro{}} is a simple QAT framework that directly addresses the critical forward-backward mismatch of low-bit optimization, achieving improved training stability and performance without incurring additional computational cost (\autoref{fig:qatflow}). At its core, \algacro{} is built upon a \textbf{Rotated Damped Fourier Surrogate (\algacrofourier{}) of the quantization operator}. The key idea is to model the discrete quantization process via a Fourier series and derive a smooth, analytically grounded surrogate for backpropagation through a geometric rotation of this representation. This construction yields a stable and well-behaved optimization direction for gradient-based training.

\paragraph{Backward Pass via \algacrofourier{}:}
As illustrated in~\autoref{fig:fft_zigzag}, we approximate the rounding operator using a Rotated Damped Fourier Surrogate (\algacrofourier{}). 
By preserving the richer structural information of the rounding operation, \algacrofourier{} provides more informative signal and addresses the ill-defined Jacobian $\partial {x}_q/\partial {x}$, thereby stabilizing and improving gradient-based QAT.
Formally, the derivative with respect to the input ${x}$ is computed by \algacrofourier{} as:
\begin{equation}
\frac{\partial \mathcal{L}}{\partial {x}}
= \frac{\partial \mathcal{L}}{\partial {x}_q} \cdot
{\fcolorbox{StableBlueStrong}{white}{$\displaystyle \frac{\partial {x}_q}{\partial {x}}$}}_{\hspace{0.1em}\text{\textcolor{StableBlueStrong}{\algacrofourier}}}
\approx
\frac{\partial \mathcal{L}}{\partial {x}_q}\cdot g({x}, {x}_q),
\label{eq:rofs}
\end{equation}
where $g({x}, {x}_q)$ is the RDFS surrogate and explicitly defined as:
\begin{equation}\label{eq:rotated_fourier_surrogate}
g({x}, {x}_q) =
\frac{1 - A\sqrt{2}\pi \sum_{m=0}^{M} \frac{(-1)^m}{2m+1} \cos\!\big((2m+1)\pi({x}+{x}_q)\big)}
{1 + A\sqrt{2}\pi \sum_{m=0}^{M} \frac{(-1)^m}{2m+1} \cos\!\big((2m+1)\pi({x}+{x}_q)\big)}.
\end{equation}
In practice, we truncate the series and typically use the first-order \algacrofourier{} (i.e., $M=0$):
\begin{equation}
\label{eq:rdfs_first_order}
\fcolorbox{StableBlueStrong}{white}{$\displaystyle
g(x,x_q)=
\frac{1 - A\sqrt{2}\pi \cos\!\big(\pi(x+x_q)\big)}
     {1 + A\sqrt{2}\pi \cos\!\big(\pi(x+x_q)\big)}
$}.
\end{equation}
Here, $A$ is a tunable hyperparameter controlling the sharpness of the surrogate. The derivation, based on a geometric rotation and Fourier analysis, is provided in the following. 

\textbf{STE as Special Case of \algacrofourier{}.} We note that with $A = 0$, \algacrofourier{} reduces to the identity mapping, which corresponds exactly to STE. In other words, STE is a special case of \algacrofourier{} that \algacrofourier{} strictly generalizes; as such, \algacrofourier{} is able to carry richer structural information during training.

\subsection{Rotated Damped Fourier Surrogate (\algacrofourier)}\label{sec:rdfs}

% \begin{figure*}[t]
%     \centering
%     \includegraphics[width=0.63\linewidth]{figures/}
%     \includegraphics[width=0.35\linewidth]{figures/}
%     \caption{FFT approximation under different orders $\{1, 2, 3\}$.}
%     \label{fig:zigzag_fft}
% \end{figure*}

\begin{figure}
    \centering
    \includegraphics[width=\linewidth]{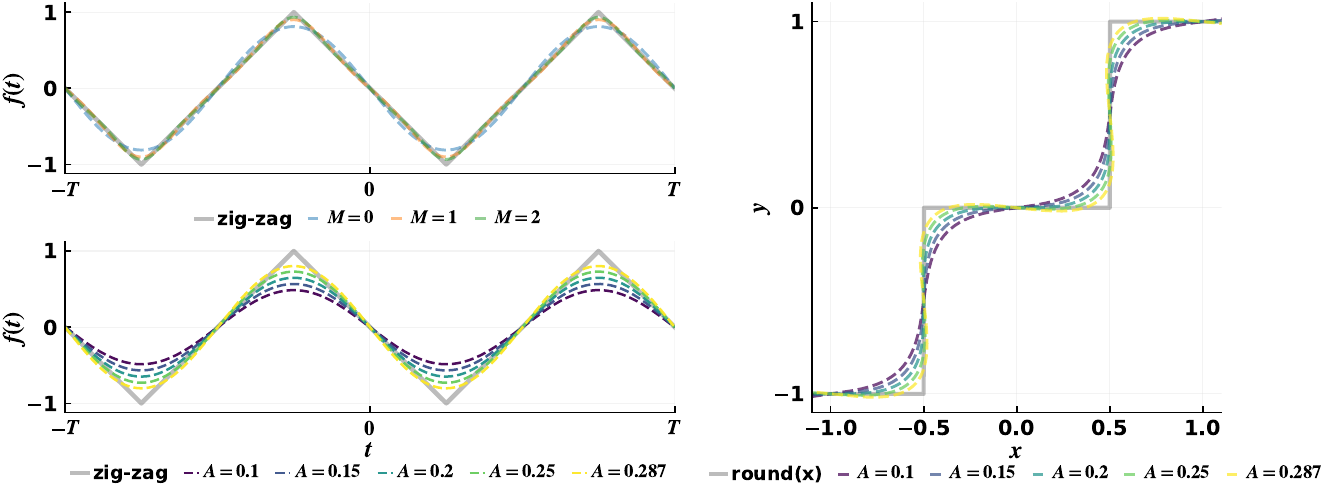}
    \caption{
    Rotated–Damped Fourier Surrogate (\algacrofourier{}) under different orders and amplitudes.
    Higher orders improve approximation fidelity but increase the computational complexity. The first-order truncation ($M=0$) has already captured the essential structure while providing stable and efficient optimization.
    }
    \label{fig:fft_zigzag}
\end{figure}

\paragraph{Coordinate Rotation.} We observe that applying a $45^\circ$ counterclockwise rotation to the coordinate system $(x, x_q)$ transforms the staircase-shaped rounding operator into a periodic, continuous, piecewise-linear function, commonly referred to as a triangle wave function (\autoref{fig:fft_zigzag}).

More formally, we introduce a rotated coordinate system $(t, f(t))$ obtained by rotating the original coordinates $(x, x_q)$ by an angle $\theta = 45^\circ$ counterclockwise. The resulting coordinates are given by
\begin{equation}\label{eq:t_and_x_and_y}
t = \frac{x + x_q}{\sqrt{2}}, \quad f(t) = \frac{- x + x_q}{\sqrt{2}}.
\end{equation}
Under the rotated coordinate system, the rounding operator becomes a periodic zig-zag function with fundamental period $T = \sqrt{2}$. Specifically, the transformed function can be expressed as a centered triangle wave:
\begin{equation}\label{eq:rotated_round_func}
f(t) = \frac{1}{2\sqrt{2}}\left(1 - 4\left|r(t) - \frac{1}{2}\right|\right),
\end{equation}
where $r(t) = \left\{\frac{t - T/4}{T}\right\}$ denotes the phase-shifted fractional part operator, and $\{\cdot\}$ extracts the fractional component by discarding the integer part. The phase shift $T/4$ centers the triangle wave symmetrically around zero, which will be convenient for subsequent Fourier analysis.

\paragraph{Fourier Approximation.}
Since the rotated rounding function $f(t)$ is periodic and square-integrable, it admits a Fourier series expansion. Applying the Fourier transformation to~\autoref{eq:rotated_round_func}, we obtain the following closed-form Fourier approximation for the rotated rounding function $f(t)$:
\begin{equation}
\begin{split}
f(t) &\approx -A\sum_{m=0}^{\infty}
\frac{(-1)^m}{(2m+1)^2}
\sin\!\left((2m+1)\sqrt{2}\pi\, t\right),
\end{split}
\end{equation}
where $(2m+1)\sqrt{2}\pi$ denotes the angular frequency of the $m$-th sinusoid term. 
The scalar $A$ is an amplitude coefficient, as illustrated in \autoref{fig:fft_zigzag}, which controls the sharpness of the resulting curve.
Notably, \textbf{instead of fixing $A$ to the vanilla Fourier amplitude $A=2\sqrt{2}/\pi^2$, we treat it as a tunable parameter.} An appropriate choice of $A$ can yield improved training stability and optimization behavior; see Section \ref{sec:amplitude} for a detailed discussion. 

We further consider a truncated Fourier approximation as 
\begin{equation}
f_M(t) = -A\sum_{m=0}^{M}
\frac{(-1)^m}{(2m+1)^2}
\sin\!\left((2m+1)\sqrt{2}\pi\, t\right),
\end{equation}
where $M$ refers to the order of Fourier approximation. Higher-order terms encode finer structural details of the rounding operator but incur increasing computational cost.

\paragraph{Inverse Rotation.}

% To employ the rotated Fourier approximation as a gradient surrogate, we map the smooth function $f(t)$ back to the original coordinates $(x, x_q)$. Based on the coordinate transformation in \eqref{eq:t_and_x_and_y}, the Fourier expansion of $f(t)$, and the chain rule, the general form of the {rotated damped Fourier surrogate} (RDFS) for the rounding operator can be expressed as:

To employ the rotated Fourier approximation as a gradient surrogate, we map $f(t)$ back to the original coordinates $(x, x_q)$. Based on the coordinate transformation in \eqref{eq:t_and_x_and_y}, the Fourier expansion of $f(t)$, and the chain rule, we derive the general form of the \textit{rotated damped Fourier surrogate} (RDFS) for the rounding operator, which provides a smooth and well-conditioned approximation to the  non-differentiable and ill-conditioned quantization mapping:
\begin{equation}\label{eq:rdfs_M}
\frac{\partial x_q}{\partial x} 
=
\frac{
1 - A\sqrt{2}\pi
\sum_{m=0}^{M}
\frac{(-1)^m}{2m+1}
\cos\!\left((2m+1)\pi(x + x_q)\right)
}{
1 + A\sqrt{2}\pi
\sum_{m=0}^{M}
\frac{(-1)^m}{2m+1}
\cos\!\left((2m+1)\pi(x + x_q)\right)
}.
\end{equation}
In practice, we retain only the first-order term ($M=0$), which provides an effective trade-off between approximation fidelity and negligible computational overhead. Under this setting, the RDFS in~\autoref{eq:rdfs_M} reduces to its first-order form as \autoref{eq:rdfs_first_order}. A complete derivation of the RDFS is provided in Appendix~\ref{appendix:fourier_rounding}.

\subsection{Damped Amplitude and Gradient Stabilization}\label{sec:amplitude}

The choice of the amplitude parameter $A$ requires careful consideration, as it directly governs the trade-off between approximation fidelity and optimization stability.

\paragraph{Effect of Amplitude.}
The amplitude $A$ controls the sharpness of the rotated Fourier surrogate and how closely it approximates the true rounding operator.
When $A$ is small (approaching STE as $A \to 0$), the surrogate is overly smooth, which weakens its ability to capture the fine-grained structure of rounding and leads to biased or ineffective gradient signals.
Increasing $A$ (while remaining within the admissible range implied by the Fourier construction) improves approximation accuracy by bringing the surrogate closer to hard rounding. However, as the surrogate sharpens, it progressively inherits the pathological optimization behavior of the true rounding operator: gradients vanish over large regions and become highly unstable near discontinuities, resulting in gradients vanishing or exploding; this trade-off motivates the use of a damped amplitude to balance expressiveness and numerical stability.

\paragraph{Well-Conditioned Surrogate Region.}
Our analysis reveals the existence of an \emph{ill-conditioned amplitude regime} in which the rotated Fourier surrogate becomes nearly tangential to horizontal plateaus as $A$ approaches $1/(\sqrt{2}\pi)$, leading to severely attenuated gradients. To avoid this, our design explicitly excludes this region. As shown in the amplitude sensitivity study in~\autoref{fig:rdfs_amplitude}, this ill-conditioned regime manifests as a pronounced performance ``trough'' in our empirical results, (a well in the performance curve), which closely aligns with our theoretical conditioning analysis, demonstrating that proper amplitude bounds are critical for stability.
% This consistency between theory and observation provides strong evidence that an proper amplitude choice can fundamentally improve optimization gain and avoid potential instability.

\paragraph{Practical Amplitude Selection.}
Although Fourier analysis suggests a theoretically admissible amplitude of $A = \tfrac{2\sqrt{2}}{\pi^2}$, strictly adhering to this value is suboptimal in practice for large-scale LLM training due to the aforementioned conditioning issues.
We therefore treat $A$ as a damped and tunable parameter. In all experiments, we select a default value of $A = 0.21$, which lies safely outside the ill-conditioned regime but is still sufficiently expressive to to capture important structural information of the rounding operator to yield effective gradient signal. This choice provides a stable and robust operating point across models and training settings. More sophisticated strategies such as dynamically evolving $A$ during training may further improve performance, but we leave these designs as future work.

\section{Theoretical and Efficiency Analysis}
\label{sec:theorem}
\subsection{Theoretical Comparison with STE}
We compare STE and \algacro{} using surrogate approximations of the rotated rounding function~\eqref{eq:rotated_round_func}. Due to periodicity, we focus on a single period $[0,T]$ without loss of generality. 
% }
As the rotated function is square-integrable, we conduct our analysis in the space $L^2([0,T])$.
\begin{theorem}\label{thm:best_L2_ste}
Let $f \in L^2([0,T])$ and let $f_n$ be the $n$th partial Fourier sum of $f$. Then, for all $n \in \mathbb{N}$,~\\
(i) $f_n$ is the unique minimizer of the $L^2$ approximation error among trigonometric polynomials of degree at most $n$;~\\
(ii) For $n \geq 1$, $\|f - f_n\|_2 < \|f - f_0\|_2$ if and only if $f$ is non-constant almost everywhere.
\end{theorem}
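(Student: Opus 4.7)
The plan is to treat $f_n$ as the orthogonal projection of $f$ onto the finite-dimensional subspace $\mathcal{T}_n \subset L^2([0,T])$ of trigonometric polynomials of degree at most $n$, and then to read both claims off a single Pythagorean decomposition.

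First I would fix the orthonormal Fourier basis $\{e_k(t) = T^{-1/2} e^{2\pi i k t / T}\}_{k \in \mathbb{Z}}$ on $L^2([0,T])$, define the Fourier coefficients $c_k(f) = \langle f, e_k \rangle$, and identify $f_n = \sum_{|k| \le n} c_k(f)\, e_k$. The key observation is that $f - f_n$ is orthogonal to each $e_k$ with $|k| \le n$ directly from the definition of $c_k$, and hence to all of $\mathcal{T}_n$. For claim (i), write any $p \in \mathcal{T}_n$ as $p = f_n + (p - f_n)$ with $p - f_n \in \mathcal{T}_n$; the Pythagorean identity yields
\[
\|f - p\|_2^2 \;=\; \|f - f_n\|_2^2 + \|f_n - p\|_2^2,
\]
which is uniquely minimized over $\mathcal{T}_n$ at $p = f_n$ since the basis elements are linearly independent in $L^2$. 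The routine point here is verifying the orthogonality; there is no real obstruction.

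For claim (ii), I would apply the same decomposition with $p = f_0 \in \mathcal{T}_n$ to obtain
\[
\|f - f_0\|_2^2 - \|f - f_n\|_2^2 \;=\; \|f_n - f_0\|_2^2 \;=\; \sum_{1 \le |k| \le n} |c_k(f)|^2.
\]
The ``$\Leftarrow$'' direction is immediate via its contrapositive: if $f$ equals a constant a.e., then $c_k(f) = 0$ for all $k \neq 0$, forcing equality. The main subtlety lies in the ``$\Rightarrow$'' direction, since for generic $f \in L^2$ non-constancy only guarantees that some $c_k(f)$ with $k \ne 0$ is nonzero, not necessarily one with $|k| \le n$. The cleanest resolution is to read the ``if and only if'' as an existential statement over $n \ge 1$ and invoke Parseval completeness: $f$ is non-constant a.e.\ iff $\sum_{k\ne 0}|c_k(f)|^2 > 0$, iff some finite partial sum $\sum_{1 \le |k| \le n_0}|c_k(f)|^2$ is strictly positive, which then yields the strict inequality at $n = n_0$ and every larger $n$.

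I would close by noting that the argument specializes immediately to the rotated rounding function in~\eqref{eq:rotated_round_func}: its explicit Fourier expansion derived in Section~\ref{sec:rdfs} has $c_{\pm 1}(f) \ne 0$, so in the applied QAT setting the strict inequality in (ii) already holds at $n = 1$, removing any ambiguity in the quantifier and confirming that even the first-order RDFS strictly outperforms the constant-surrogate STE baseline in $L^2$ approximation error.
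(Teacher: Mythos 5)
Your argument is essentially the paper's own. For part (i), the paper carries out the same Pythagorean/best-approximation decomposition, just in coordinates: it expands $\|f-g\|_{L^2}^2$ in the real sine--cosine basis and completes the square to obtain $\|f-g\|_{L^2}^2=\|f-f_n\|_{L^2}^2+T(\alpha_0-a_0)^2+\tfrac{T}{2}\sum_{k=1}^n\bigl[(\alpha_k-a_k)^2+(\beta_k-b_k)^2\bigr]$, which is exactly your orthogonal-projection identity. For part (ii), both you and the paper reduce to $\|f-f_0\|_{L^2}^2-\|f-f_n\|_{L^2}^2=\tfrac{T}{2}\sum_{k=1}^n(\alpha_k^2+\beta_k^2)$. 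Where you diverge is the ``only if'' direction of (ii), and your caution there is well founded: the paper's proof shows that $f$ is constant a.e.\ if and only if $\alpha_k=\beta_k=0$ for \emph{all} $k\ge 1$, but the theorem requires the equivalence with vanishing of only the coefficients $1\le k\le n$, and these conditions are not the same. Read for a fixed $n$, the statement is in fact false in general --- take $f(u)=\sin(4\pi u/T)$ and $n=1$, for which $f$ is non-constant yet $f_1=f_0=0$, so equality holds. Your repair --- reading the claim existentially in $n$, or noting that the rotated rounding function has a nonzero first harmonic ($b_1=-2\sqrt{2}/\pi^2\ne 0$), so the strict inequality already holds at $n=1$ in the intended application --- is the correct way to salvage the result, and is more careful on this point than the paper's own treatment.
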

\vspace{-4mm}
\begin{proof}
See Appendix~\ref{appendix:thm5_1}.
\end{proof}
\vspace{-2mm}
\textbf{Remark I.} Theorem~\ref{thm:best_L2_ste}(i) implies that the $n$th partial Fourier sum is the optimal $L^2$ surrogate of the rotated rounding function among trigonometric polynomials of degree at most $n$. For STE, the surrogate is restricted to constant functions. In contrast, \algacro{} admits surrogates from trigonometric polynomials of degree at most $n\in \mathbb{N}$, making STE a special case of \algacro{} with $n=0$.

\textbf{Remark II.} Theorem~\ref{thm:best_L2_ste}(ii) implies that, unless the function is constant almost everywhere, its surrogate drawn from trigonometric polynomials of degree of at least 1 achieves a strictly smaller $L^2$ approximation error than those restricted to constant functions. Consequently, StableQAT admits surrogate functions that are provably closer to the rotated rounding function than STE in the $L^2$ sense.
\subsection{Theoretical Comparison with DSQ}
Next, we compare DSQ and \algacro{} in terms of gradient stability by analyzing the statistical properties of their surrogate gradients
% under uniform sampling 
over the clipping range. Our analysis focuses on the asymptotic regime in which the surrogate function closely approximates the rounding function. Such setting is particularly relevant for low-bit QAT (2--4 bits), where capturing discrete quantization effects is critical.
\begin{theorem}\label{thm:asymptotic_dsq}
Let $\gdsq(\cdot;\alpha)$ and $g_{\emph{\algacro{}}}(\cdot;A)$ be the gradient surrogates of DSQ and \algacro{}, parameterized by $\alpha > 0$ and $A\in (0, \frac{1}{\sqrt{2}\pi})$, respectively. Let random variable $\xi$ be uniformly distributed on the interval $[l,u]$, i.e., $\xi \sim U(l,u)$. Consider the asymptotic regimes in which the surrogate functions move close to the rounding function. Then, the limits of the expectations satisfy
\begin{equation*}
\begin{aligned}
\lim_{\alpha \to 0^+}
\E_{\xi \sim U(l,u)}\!\left[\gdsq(\xi;\alpha)\right]
&= 1, \\
\lim_{A \to \left(\frac{1}{\sqrt{2}\pi}\right)^-}
\E_{\xi \sim U(l,u)}\!\left[g_{\emph{\algacro{}}}(\xi;A)\right]
&= \frac{4}{\pi} - 1,
\end{aligned}
\end{equation*}
and the limits of the variances satisfy
\begin{equation*}
\begin{aligned}
\lim_{\alpha \to 0^+}
\Var_{\xi \sim U(l,u)}\!\left[g_{\emph{DSQ}}(\xi;\alpha)\right]
&= \infty, \\
\lim_{A \to \left(\frac{1}{\sqrt{2}\pi}\right)^-}
\Var_{\xi \sim U(l,u)}\!\left[g_{\emph{\algacro{}}}(\xi;A)\right]
&= \frac{16}{3\pi} - \frac{16}{\pi^2}.
\end{aligned}
\end{equation*}
\end{theorem}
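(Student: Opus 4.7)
The plan is to reduce both limits to explicit integrals over a single quantization bin and then evaluate them in closed form. By periodicity of the rounding operator, it is without loss of generality to assume $[l,u]=[-1/2,1/2]$, so that $x_q = \mathrm{round}(\xi) = 0$ throughout the interval. This removes the dependence on $x_q$ inside the surrogate and yields one-variable integrands.

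For the \algacro{} surrogate, I would set $c = A\sqrt{2}\pi \in (0,1)$ in~\eqref{eq:rdfs_first_order} with $x_q=0$ to obtain
\begin{equation*}
g_{\algacro{}}(\xi;A) \;=\; \frac{1 - c\cos(\pi\xi)}{1 + c\cos(\pi\xi)}.
\end{equation*}
Substituting $u=\pi\xi$ gives $\E[g_{\algacro{}}] = \tfrac{1}{\pi}\int_{-\pi/2}^{\pi/2}\tfrac{1-c\cos u}{1+c\cos u}du$. Because $\cos u \ge 0$ on $[-\pi/2,\pi/2]$, the denominator stays bounded away from zero as $c\to 1^-$, so dominated convergence lets me pass the limit inside the integral. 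The limit integrand simplifies via the half-angle identity to $\tan^2(u/2)$, and a direct evaluation using $\int \tan^2 v\,dv = \tan v - v$ produces $\tfrac{4}{\pi}-1$. For the second moment, the same substitution and the reduction $\int \tan^4 v\,dv = \tfrac{1}{3}\tan^3 v - \tan v + v$ yield $\E[g_{\algacro{}}^2] \to 1 - \tfrac{8}{3\pi}$, whence the variance limit $\tfrac{16}{3\pi}-\tfrac{16}{\pi^2}$ follows by the identity $\Var = \E[g^2] - (\E[g])^2$.

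For DSQ, I would use its standard sharpness parameterization in the asymptotic hard-rounding regime, in which (after centering at the nearest integer) the derivative takes the form $g_{\mathrm{DSQ}}(\xi;\alpha) = \tfrac{1}{2\alpha}\,\sech^2(\xi/\alpha)\,(1+o(1))$ as $\alpha\to 0^+$. The substitution $u=\xi/\alpha$ converts the mean integral to $\tanh(1/(2\alpha)) \to 1$. For the variance, the same substitution gives $\E[g_{\mathrm{DSQ}}^2] = \tfrac{1}{4\alpha}\int_{-1/(2\alpha)}^{1/(2\alpha)} \sech^4 u\, du$; here the integral is uniformly bounded (converging to $4/3$), but the prefactor $1/(4\alpha)$ diverges, forcing $\Var \to \infty$. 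Combined with the bounded mean, this establishes the DSQ divergence.

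The main obstacle is not the integration itself but justifying the limit-integral interchanges at the boundary of each admissible parameter range. For \algacro{}, this requires verifying that $\tan^2(u/2)$ is integrable on $[-\pi/2,\pi/2]$ and dominates the family $\{(1-c\cos u)/(1+c\cos u)\}_{c\in(0,1)}$ uniformly away from $u=\pm \pi/2$, which is straightforward because the singularity at $u=\pm\pi/2$ only appears in the limit $c=1$ and is still integrable there. For DSQ, the care is in isolating the leading-order $1/\alpha$ scaling of $\E[g_{\mathrm{DSQ}}^2]$ and arguing the truncation of the $\sech^4$ integral to $[-1/(2\alpha),1/(2\alpha)]$ does not alter the divergence, since the tails of $\sech^4$ decay exponentially.
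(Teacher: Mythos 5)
Your proposal reaches the correct limits and is sound, but it takes a genuinely different route from the paper. The paper first derives \emph{exact} closed-form expressions for the mean and variance at every admissible parameter value (Lemma~\ref{lem:exp-var}), using the integrals $\int_{-\pi/2}^{\pi/2}\frac{d\theta}{1+c\cos\theta}$ and $\int_{-\pi/2}^{\pi/2}\frac{d\theta}{(1+c\cos\theta)^2}$, and only then sends $c\to1^-$, which requires a delicate cancellation of two individually divergent terms via a Taylor expansion of $\arctan$ (the $T_1(c)$ computation in~\eqref{eq:T1c-limit}). You instead interchange limit and integral first and evaluate the much simpler limiting integrands $\tan^2(u/2)$ and $\tan^4(u/2)$; this avoids the divergent-term cancellation entirely and is arguably cleaner for the limit statement, at the cost of not producing the finite-$A$ formulas the paper also uses (e.g.\ for \autoref{fig:exp-var-trend}). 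Your justification of the interchange is actually easier than you make it: on $[-\pi/2,\pi/2]$ one has $\cos u\ge 0$, so $0\le \frac{1-c\cos u}{1+c\cos u}\le 1$ uniformly in $c\in(0,1)$ — there is no singularity at $u=\pm\pi/2$ (the denominator equals $1$ there), and dominated convergence holds with the constant dominating function $1$ for both moments.

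On the DSQ side there is a small but real caveat. The theorem's $\alpha$ is DSQ's specific gap parameter, defined by $\alpha=1-\tanh(k\Delta/2)$ with scale $s=\frac{1}{1-\alpha}$, so the exact gradient is $\frac{\ln(\frac{2-\alpha}{\alpha})}{2(1-\alpha)}\sech^2\bigl(\frac{\ln((2-\alpha)/\alpha)}{\Delta}(\xi-m_i)\bigr)$; with this normalization the mean is \emph{identically} $1$ for every $\alpha$ (the paper's \eqref{eq:dsq-first-moment}), not merely $\tanh(1/(2\alpha))\to1$, and the second moment diverges like $\tfrac{1}{3}\ln\frac{2-\alpha}{\alpha}$ rather than like $\tfrac{1}{3\alpha}$. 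Your generic form $\frac{1}{2\alpha}\sech^2(\xi/\alpha)(1+o(1))$ corresponds to the reparameterization $\tilde\alpha=\Delta/\ln\frac{2-\alpha}{\alpha}$, which tends to $0^+$ together with $\alpha$, so the stated limits ($1$ and $+\infty$) are unaffected; but to claim the theorem as written you should make this identification explicit rather than asserting the $\sech^2$ form as DSQ's "standard" parameterization.
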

\vspace{-4mm}
\begin{proof}
See Appendix~\ref{appendix:thm5_2}.
\end{proof}
\vspace{-2mm}
% \textbf{Remark I.} In Theorem~\ref{thm:asymptotic_dsq}, $g_{\emph{\algacro{}}}(\cdot;A)$ is derived using the first-order Fourier partial sum of the rotated rounding function. Although \algacro{} allows higher-order Fourier partial sums, we focus on the first-order case as it is the variant used in our experiments and suffices to reveal the differences in gradient stability. 
% Additionally, we conduct the analysis using uniform sampling over the clipping range, which is a common practice in the theoretical analysis \cite{choukroun2019low}.

% adopt uniform sampling over the clipping range as a simplified assumption, a common practice in the theoretical analysis of quantization methods~\cite{choukroun2019low}. 

\textbf{Remark III.} Theorem~\ref{thm:asymptotic_dsq} shows that DSQ's gradient variance diverges as it sharpens towards the rounding function, whereas \algacro{} maintains bounded variance ($\approx0.076$) at maximum sharpness. This distinction directly impacts training stability: bounded variance ensures consistent gradient magnitudes, while divergent variance may cause gradient explosions. Thus, gradient stability is a key advantage of \algacro{} over DSQ as present in \autoref{fig:exp-var-trend}.

\begin{figure}[h]
\centering
\includegraphics[width=\linewidth]{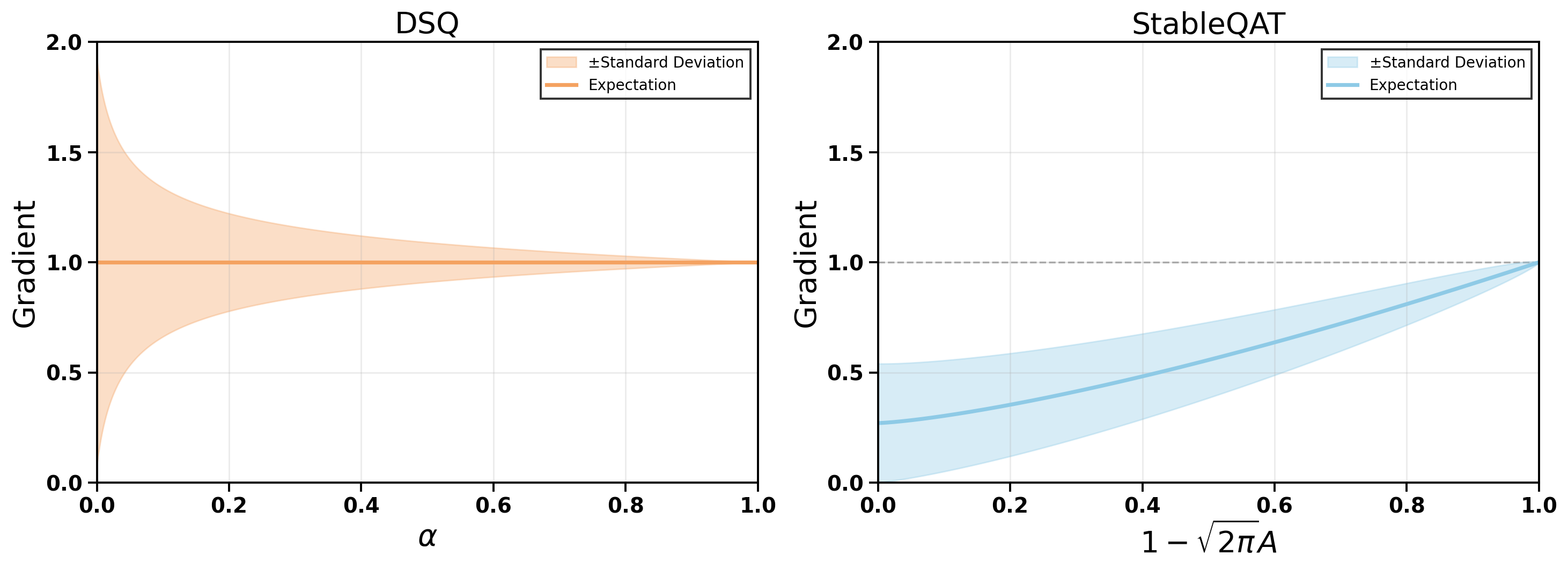}
\caption{\textbf{Gradient spread of DSQ and StableQAT.} As the surrogate sharpens towards the rounding operator, DSQ exhibits exploding variance, while StableQAT shows bounded variance.}
\label{fig:exp-var-trend}
% \vspace{-1em}
\end{figure}

\subsection{Efficiency Comparison}\label{sec:efficiency_comparison}

We further show that \algacrofourier{} achieves nearly identical computational efficiency to STE despite being faster and more lightweight than DSQ. This makes \algacro{} suitable as a plug-and-play surrogate for a wide range of QAT pipelines, providing consistent performance gains without additional computational/memory cost.

\paragraph{Computational Efficiency: \texttt{cosine} vs.\ \texttt{exp}.} Soft quantization methods such as DSQ~\citep{gong2019dsq} and SigmoidQuant~\citep{semenov2025smooth} approximate rounding using \texttt{sigmoid} or \texttt{tanh}, which can rely on expensive exponential evaluations. On hardware, \texttt{exp} can require high-order polynomial approximation, and additional numerical-stability handling, leading to higher latency and register pressures~\citep{muller2018handbook,nvidia_cuda_guide}. In contrast, trigonometric functions like \texttt{cosine} and \texttt{sine} operate on bounded inputs, admit lower-degree polynomial approximations, and avoid saturation handling. Consequently, \algacrofourier{} exhibits lower and more predictable execution cost, which is especially beneficial in QAT where surrogate gradients are repeatedly evaluated.

\paragraph{\algacrofourier{} is Fusion-Friendly.}
\algacrofourier{} consists only of elementary arithmetic and a single trigonometric operation, without auxiliary states or annealing schedules. Its low register footprint and branch-free structure make it well suited for kernel fusion in modern deep learning runtimes, e.g., CUDA~\citep{nvidia_cuda_guide}, and Triton~\citep{tillet2019triton}. 

\paragraph{Negligible Computational Overhead.} We benchmark \algacrofourier{} on LLaMA-3-1B with batch size 4 and sequence length 128. As shown in \autoref{fig:time_space_complexity_comparison}, \algacro{} matches STE in backward-pass latency and memory usage, while being up to $5\times$ more efficient than \texttt{exp}-based surrogate like DSQ.

\begin{figure}
    \centering
    \includegraphics[width=\linewidth]{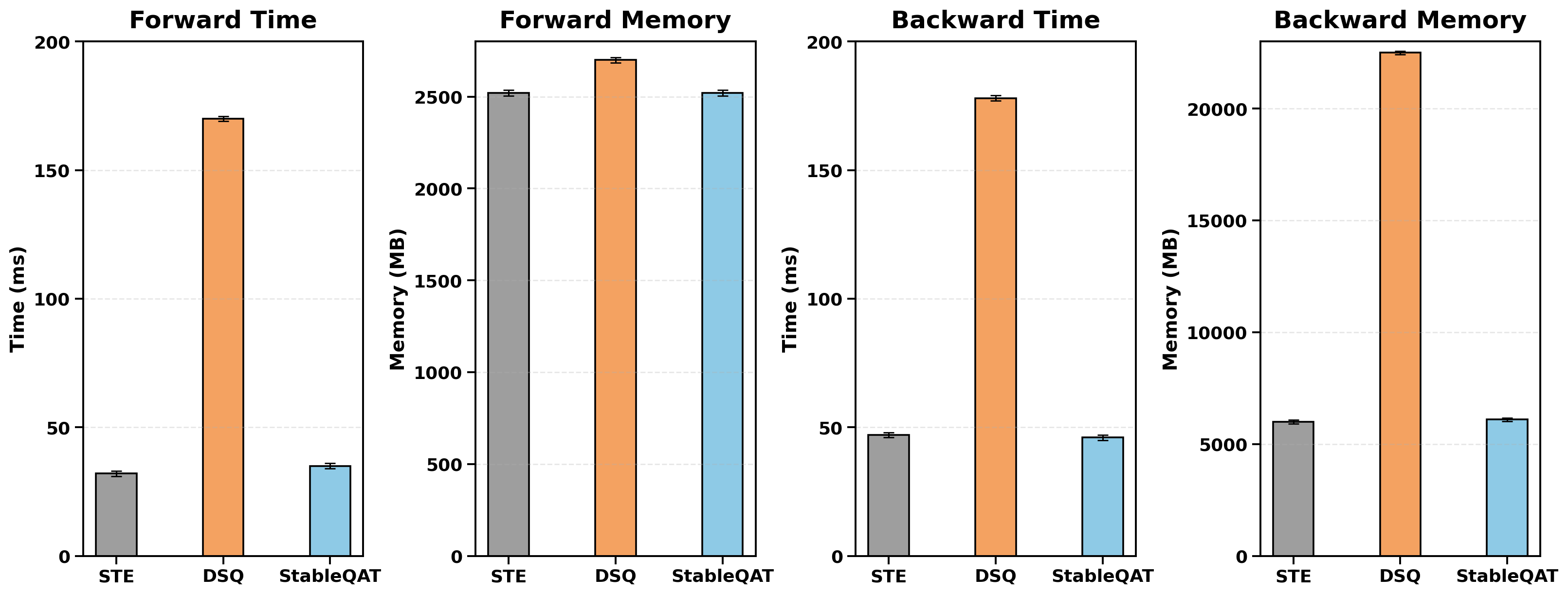}
    \caption{\textbf{Time and space cost comparison (lower is better).} 
    % They are measured on an input tensor with batch size 4 and sequence length 128. Each method is repeated 20 times.
    }
    \label{fig:time_space_complexity_comparison}
    % \vspace{-1em}
\end{figure}

\section{Experiments}

We perform comprehensive evaluations of \algacro{} from three complementary perspectives:
\textit{(i)} model performance,
\textit{(ii)} training stability, and
\textit{(iii)} ablative studies to assess the impact of key design choices.
Our experiments span both Large Language Models (LLMs) and Vision Transformers (ViTs) in Appendix~\ref{appendix:vit} to demonstrate the generality, effectiveness, and robustness of \algacro{} across multiple training scenarios.

\subsection{Large Language Models}

\begin{table*}[t]
    \centering
    \caption{LLaMA-3.2-1B results. \algacro{} at 4-bit outperforms 16-bit baseline, while 3-bit remains competitive.}
    \label{tab:llama3-1b}
    \resizebox{\linewidth}{!}{
    \begin{tabular}{cccccccccccccccccc}
    \toprule
   \rowcolor{gray!20} Bits & Method & Setting & Arc-e & Arc-c & Boolq & Hellaswag & Openbookqa & Piqa & SciQ   & Winogrande & Avg & $\Delta$ & GPU Days & SpeedUp\\
    \midrule
    16 & Baseline & Baseline & 60.61 & 36.01 & 63.70 & 63.78 & 36.60 & 74.43 & 88.40 & 60.06 & 60.45 & -- & -- & --\\ 
    8 & Baseline no QAT & Baseline (no QAT)& 60.06 & 36.01 & 64.01 & 63.71 & 36.40 & 74.32 & 88.70 & 59.59 & 60.35 & -- & -- & -- \\ 
    6 & Baseline no QAT & Baseline (no QAT)& 61.15 & 36.35 & 62.60 & 62.79 & 37.40 & 74.27 & 88.30 & 59.91 & 60.35 & -- & -- & --\\
    \midrule
    4 & Baseline (no QAT) & Baseline (no QAT) & 43.31 & 27.05 & 52.57 & 46.75 & 31.20 & 65.83 & 78.00 & 53.12 & 49.73 & -- & -- & -- \\ 
    4 & ParetoQ & 10B Tokens \& lr=1e-5  & 59.89 & 30.97 & 59.27 & 56.07 & 34.40 & 71.65 & 88.80 & 56.83 & \underline{57.24} & -- &  \textbf{0.45} & -- \\
    4 & DSQ & 10B  Tokens \& lr=1e-5  & 55.01 & 31.66 & 61.16 & 55.75 & 34.20 & 71.87 & 86.00 & 56.27 & 56.49 & -- & 0.64 & --\\
    \rowcolor{blue!10}4 & \textbf{\algacro{}} & 10B  Tokens \& lr=1e-5 & 60.48 & 32.17 & 62.02 & 56.54 & 32.60 & 72.63 & 89.60 & 56.91 & \textbf{57.87} & \textbf{\textcolor{green!60!black}{+1.38}} &  \textbf{0.45} & \textbf{\textcolor{green!60!black}{1.42x}}\\
    \midrule
    {4} & ParetoQ & 20B Tokens \& lr=1e-4 & 65.49 & 36.86 & 63.82 & 61.22 & 39.40 & 74.70 & 86.70 & 59.51 & 60.96 & -- & \textbf{0.90} & -- \\
    {4} & DSQ & 20B Tokens \& lr=1e-4 & 65.45 & 37.37 & 63.73 & 62.04 & 39.40 & 74.16 & 86.40 & 59.98 & \underline{61.07} & -- & 1.30 & --\\
    \rowcolor{blue!10}4 & \textbf{\algacro{}} & 20B Tokens \& lr=1e-4 & 65.74 & 37.54 & 64.01 & 61.15 & 40.00 & 74.59 & 86.40 & 60.46 & \textbf{61.24} & \textbf{\textcolor{green!60!black}{+0.25}} & 0.91 & \textbf{\textcolor{green!60!black}{1.43x}}\\
    \midrule
    3 & Baseline (no QAT) & Baseline (no QAT) & 24.74 & 26.30 & 40.64 & 26.11 & 29.00 &  50.49 & 24.90 &  48.30 & 33.81 & -- & -- & -- \\
    3 & ParetoQ & 10B Tokens \& lr=1e-5& 30.85 & 22.10 & 48.53 & 29.19 & 27.80 & 55.33 & 50.30 & 47.28 & \underline{38.92} & -- & {0.46} & -- \\
    3 & DSQ & 10B Tokens \& lr=1e-5  & 29.88 & 23.63 & 46.27 & 29.31 & 28.00 & 53.86 & 46.90 & 48.54 & 38.30 & -- & 0.64 & -- \\
    \rowcolor{blue!10}3& \textbf{\algacro{}} & 10B Tokens \& lr=1e-5 & 38.55 & 23.98 & 59.24 & 33.14 & 27.00 & 59.63 & 67.70 & 52.17 & \textbf{45.18} & \textbf{\textcolor{green!60!black}{+6.88}} & \textbf{0.45} & \textbf{\textcolor{green!60!black}{1.45x}} \\
    \midrule
    3 & ParetoQ & 20B Tokens \& lr=2e-4 & 64.24 &35.69 &  61.23 & 58.91 & 37.40 & 73.32 & 87.50  & 58.27 & \underline{59.57} & -- & \textbf{0.91} & --\\
    3 & DSQ & 20B Tokens \& lr=2e-4  &  60.77 & 32.85 & 59.76 & 56.21 & 36.60 & 72.42 & 83.10  & 57.62 & 57.42 & -- & 1.29 & -- \\
    \rowcolor{blue!10}3 & \textbf{\algacro{}} & 20B Tokens \& lr=2e-4 & 64.06 & 36.43  & 63.24 & 59.49 & 37.80 & 73.67 & 86.7  & 59.59 & \textbf{60.12} & \textbf{\textcolor{green!60!black}{+2.70}} & \textbf{0.91}
    & \textbf{\textcolor{green!60!black}{1.42x}}
    \\
    \midrule
    2 & ParetoQ & 30B Tokens \& lr=1e-4 & 60.02 & 34.22 & 57.65 & 55.63 & 35.80 & 72.91 & 82.90 & 59.19 & \underline{57.29} & -- & 1.38 & --  \\
    {2} & DSQ & 30B Tokens\& lr=1e-4 & 59.13 & 32.59 & 58.72 & 55.65 & 36.20 & 72.03 & 80.50 & 56.27 & 56.39 & -- & 1.95 & --\\
    \rowcolor{blue!10} 2 & \textbf{\algacro{}} & 30B Tokens\& lr=1e-4  & 61.53 & 32.94 & 63.00 & 56.24 & 37.40 & 72.63 & 83.10 & 57.77 &  \textbf{58.08} & \textbf{\textcolor{green!60!black}{+1.69}} & \textbf{1.36} & \textbf{\textcolor{green!60!black}{1.43x}}\\
    \bottomrule
  \end{tabular}
  }
\end{table*}

\paragraph{Experimental Setup.}
As \algacro{} is designed to resolve the forward-backward mismatch of the rounding operator in QAT, 
we focus our comparison on methods that differ primarily in their gradient surrogates. In particular, we compare against ParetoQ, a recent state-of-the-art STE-based QAT method that represents perhaps the strongest instantiation for low-bit QAT, and DSQ, a canonical soft-rounding surrogate that smooths the quantization operator. Together, these two baselines cover the dominant surrogate paradigms used in modern QAT, serving as a representative set of baselines to compare with \algacro{}.

We follow the experimental setup of ParetoQ and evaluate  LLaMA-3-1B and LLaMA-3-3B~\citep{grattafiori2024llama3herdmodels}, under weight 2--4 bit quantization. Experiments are conducted via 16 H100 GPUs with a pretokenized training corpus. Performance is assessed on a popular suite of benchmarks, including ARCEasy, ARC-Challenge, BoolQ, HellaSwag, OpenBookQA, PIQA, SciQ, Winogrande~\citep{lmevalharness}. 
Both ParetoQ and DSQ are reproduced from their official repositories with recommended hyperparameters and training schedules.
The training corpus is constructed by mixing SlimPajama~\citep{cerebras2023slimpajama} and FineWeb-Edu~\citep{penedo2024finewebdatasetsdecantingweb} via a one-to-one ratio, while varying the total token budget across experiments. Since STE is a special case of \algacrofourier{}, we report the performance delta and training speed-up against DSQ. 

\paragraph{Results on LLaMA-3-1B.}
Across all evaluated bit-widths (2--4 bits), \algacro{} consistently outperforms both ParetoQ and DSQ (\autoref{tab:llama3-1b}), achieving improvements of up to \textcolor{ForestGreen}{\textbf{6.88\%}} under different training recipes.
At \textbf{4 bits}, \algacro{} delivers the best overall performance, surpassing ParetoQ and DSQ by \textcolor{ForestGreen}{\textbf{0.25\%}}--\textcolor{ForestGreen}{\textbf{1.38\%}}, and in several settings even exceeding the FP16 baseline.
The advantage becomes substantially more pronounced at \textbf{3 bits}, where \algacro{} consistently improves over both baselines by \textcolor{ForestGreen}{\textbf{2.70\%}}--\textcolor{ForestGreen}{\textbf{6.88\%}}, highlighting its effectiveness in the regime where QAT noise becomes severe. Under the most challenging \textbf{2-bit} setting, \algacro{} remains stable and achieves higher performance by \textcolor{ForestGreen}{\textbf{1.69\%}}, while ParetoQ and DSQ face more variance or training collapse (\autoref{fig:performance_error_bar_analysis}). Besides the standalone efficiency comparison in Setion~\ref{sec:efficiency_comparison}, \algacro{} achieves approximately \textcolor{ForestGreen}{\textbf{1.43$\times$}} end-to-end training speedup against DSQ with almost identical training cost to the standard STE-based QAT methods.

% \vspace{-1em}

% \paragraph{Results on LLaMA-3-3B.}
% We observe consistent and often amplified trends on the larger LLaMA-3-3B model, as reported in \autoref{tab:llama3-3b}, indicating that the benefits of \algacro{} scale favorably with model size.
% Across all 2--4 bit settings, \algacro{} uniformly outperforms ParetoQ and DSQ, achieving gains of up to \textcolor{ForestGreen}{\textbf{2.67\%}} at \textbf{4 bits} and \textcolor{ForestGreen}{\textbf{2.38\%}} at \textbf{3 bits}.
% Notably, the \textbf{4-bit} \algacro{} model surpasses the FP16 baseline, while the \textbf{3-bit} configuration reaches near full-precision performance, demonstrating that aggressive quantization can be achieved without sacrificing accuracy on larger models.
% Even at \textbf{2 bits}, where optimization is particularly fragile, \algacro{} maintains stable training dynamics and competitive performance. These results confirm that \algacro{} provides a scalable and robust solution for ultra-low-bit QAT of large language models.

\paragraph{Results on LLaMA-3-3B.}
We observe consistent and often amplified trends on the larger LLaMA-3-3B model (\autoref{tab:llama3-3b}), indicating that the benefits of \algacro{} scale favorably with model size.
Across the \textbf{4-bit} and \textbf{3-bit} settings, \algacro{} uniformly outperforms both ParetoQ and DSQ, achieving gains of up to \textcolor{ForestGreen}{\textbf{2.67\%}} and \textcolor{ForestGreen}{\textbf{2.38\%}}, respectively.
Notably, the \textbf{4-bit} \algacro{} model surpasses the FP16 baseline, while the \textbf{3-bit} configuration reaches near full-precision performance, demonstrating that aggressive quantization can be achieved without sacrificing accuracy on larger models.
Under the most challenging \textbf{2-bit} setting, \algacro{} remains stable and continues to outperform ParetoQ, though its final accuracy is slightly below DSQ.
This suggests that while \algacro{} effectively stabilizes ultra-low-bit optimization, further improvements such as curriculum or staged training strategies may be beneficial to reach the peak performance, which we leave for future work. \algacro{} consistently achieves an end-to-end training speedup of approximately \textcolor{ForestGreen}{\textbf{1.27$\times$}} over DSQ.

\begin{table*}[t]
    \centering
    \caption{LLaMA-3.2-3B results. \algacro{} at 4-bit outperforms the 16-bit baseline, while 3-bit \algacro{} remains competitive.}
    \label{tab:llama3-3b}
    \resizebox{\linewidth}{!}{
    \begin{tabular}{ccccccccccccccc}
    \toprule
    \rowcolor{gray!20} Bits & Method & Setting & Arc-e & Arc-c & Boolq & Hellaswag & Openbookqa & Piqa & SciQ & Winogrande & Avg & $\Delta$ & GPU Days & SpeedUp\\
    \midrule
    16 & Baseline & Baseline & 71.63 & 45.99 & 73.39 &73.61 & 43.00 & 77.48 & 92.70 & 69.85 & 68.46 & -- & -- & --\\ 
    8 & Baseline no QAT & Baseline (no QAT) & 71.34 & 46.25 & 73.30 & 73.62 & 42.60 & 77.58 & 92.80 & 69.93 & 68.43 & -- & -- & -- \\ 
    6 & Baseline no QAT & Baseline (no QAT) & 71.89 & 45.65 & 73.98 & 73.57 & 42.00 & 77.20 & 92.80 & 70.09 & 68.40 & -- & -- & --\\
    \midrule
    4 & Baseline (no QAT) & Baseline (no QAT) & 61.99 & 37.63 & 68.44 & 66.87 & 36.40 & 74.48 & 89.40 & 61.96 & 62.15 & -- & -- & --\\ 
    4 & ParetoQ & 20B Tokens \& lr=1e-4 & 71.83 & 45.48 & 70.13 & 71.20 & 42.40 & 76.58 & 90.60 & 66.25 &  \underline{66.81} & -- & \textbf{5.77} & -- \\
    4 & DSQ & 20B Tokens \& lr=1e-4  &  70.19 & 41.73 & 68.39 & 64.58 & 39.40 &  76.51 & 90.93 & 64.40 &  64.48& -- & 7.29 & -- \\
    % 4 & DSQ & 20B Tokens \& lr=1e-4  &  66.29 & 37.88 & 60.80 & 64.25 & 34.80 & 74.05 & 87.80 & 61.64 & 60.94 & -- \\
     \rowcolor{blue!10}4 & \textbf{\algacro{}} & 20B Tokens \& lr=1e-4 & 72.05 & 44.97 & 70.21 & 71.28 & 42.00 & 77.58 & 91.50 & 67.64 & \textbf{67.15} & \textbf{\textcolor{green!60!black}{+2.67}} & 5.80 & \textbf{\textcolor{green!60!black}{1.26x}} \\
    \midrule
    3 & Baseline (no QAT) & Baseline (no QAT) & 26.14 & 25.43 & 44.01 & 26.61 & 28.40 & 52.50 & 26.00 & 49.01 & 34.76 & -- & -- & --\\
    3 & ParetoQ & 20B Tokens \& lr=1e-4& 71.47 & 45.16 & 70.28 & 70.00 & 42.00 & 76.15 & 91.00 & 65.93  & \underline{66.50} & -- & 5.97 & --\\
    3 & DSQ & 20B Tokens \& lr=1e-4  & 66.58 & 41.64 & 70.24 & 66.58 & 40.60 & 76.28 & 89.80 & 64.48 & 64.52 & -- & 7.43 & -- \\ 
    \rowcolor{blue!10}3& \textbf{\algacro{}} & 20B Tokens \& lr=1e-4 & 71.04 & 44.97 & 69.02 & 70.81 & 44.00 & 78.29 & 90.4 & 66.69 & \textbf{66.90} &  \textbf{\textcolor{green!60!black}{+2.38}} & \textbf{5.75} & \textbf{\textcolor{green!60!black}{1.29x}}\\ 
    \midrule
    2 & Baseline (no QAT) & Baseline (no QAT) & 25.55 & 24.91 & 43.88 & 26.03 & 28.80 & 53.26 & 21.40 & 48.30 & 34.02 & -- & -- & --\\
    2 & ParetoQ & 30B Tokens \& lr=1e-4  &  65.73 & 38.08 & 65.73 & 64.13 & 39.40 & 74.68 & 84.30 & 61.43 & 61.69 & -- & 8.81 & --\\
    2 & DSQ & 30B Tokens \& lr=1e-4 &  69.19 & 40.78 & 65.54 & 66.03 & 41.60 & 74.92 & 88.60 & 63.61 & \textbf{63.78} & -- & 11.08 & -- \\
    \rowcolor{blue!10} 2 & \textbf{\algacro{}} & 30B Tokens \& lr=1e-4  & 68.48 & 40.87 & 63.06 & 65.20 & 41.00  & 75.41 & 87.00 & 63.38 & \underline{63.05} & \textbf{\textcolor{red!60!black}{-0.73}} & \textbf{8.75} & \textbf{\textcolor{green!60!black}{1.27x}}\\
    \bottomrule
  \end{tabular}
  }
\end{table*}

\begin{figure}[ht]
    \centering

    % ---- Row 1: lr = 1e-5 ----
    \begin{subfigure}[t]{0.98\linewidth}
        \centering
        \includegraphics[width=0.49\linewidth]{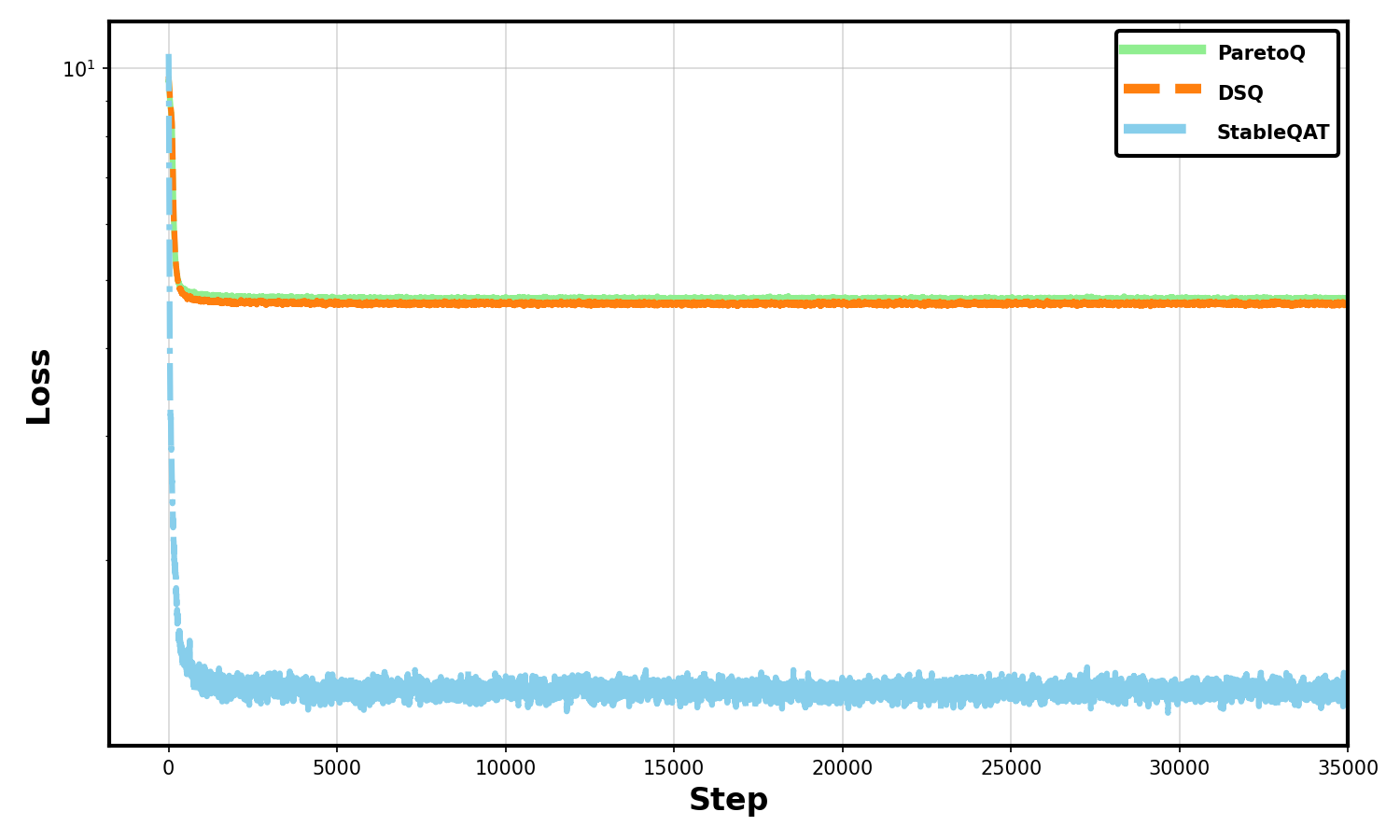}
        \includegraphics[width=0.49\linewidth]{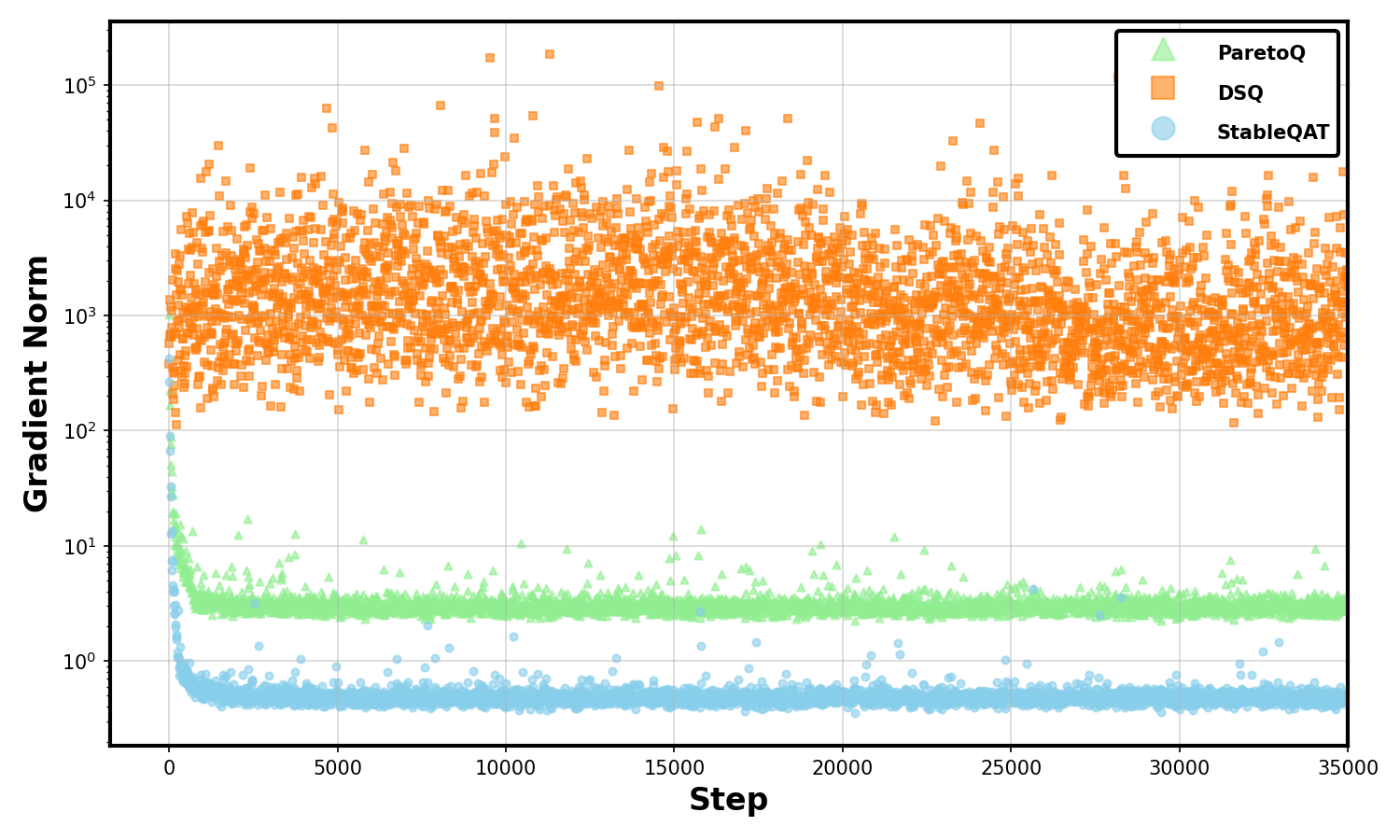}
        \caption{Learning rate $1\times 10^{-5}$.}
        \label{fig:train_loss_comparison_lr1e-5}
    \end{subfigure}
    % ---- Row 2: lr = 2e-4 ----
    \begin{subfigure}[t]{0.98\linewidth}
        \centering
        \includegraphics[width=0.48\linewidth]{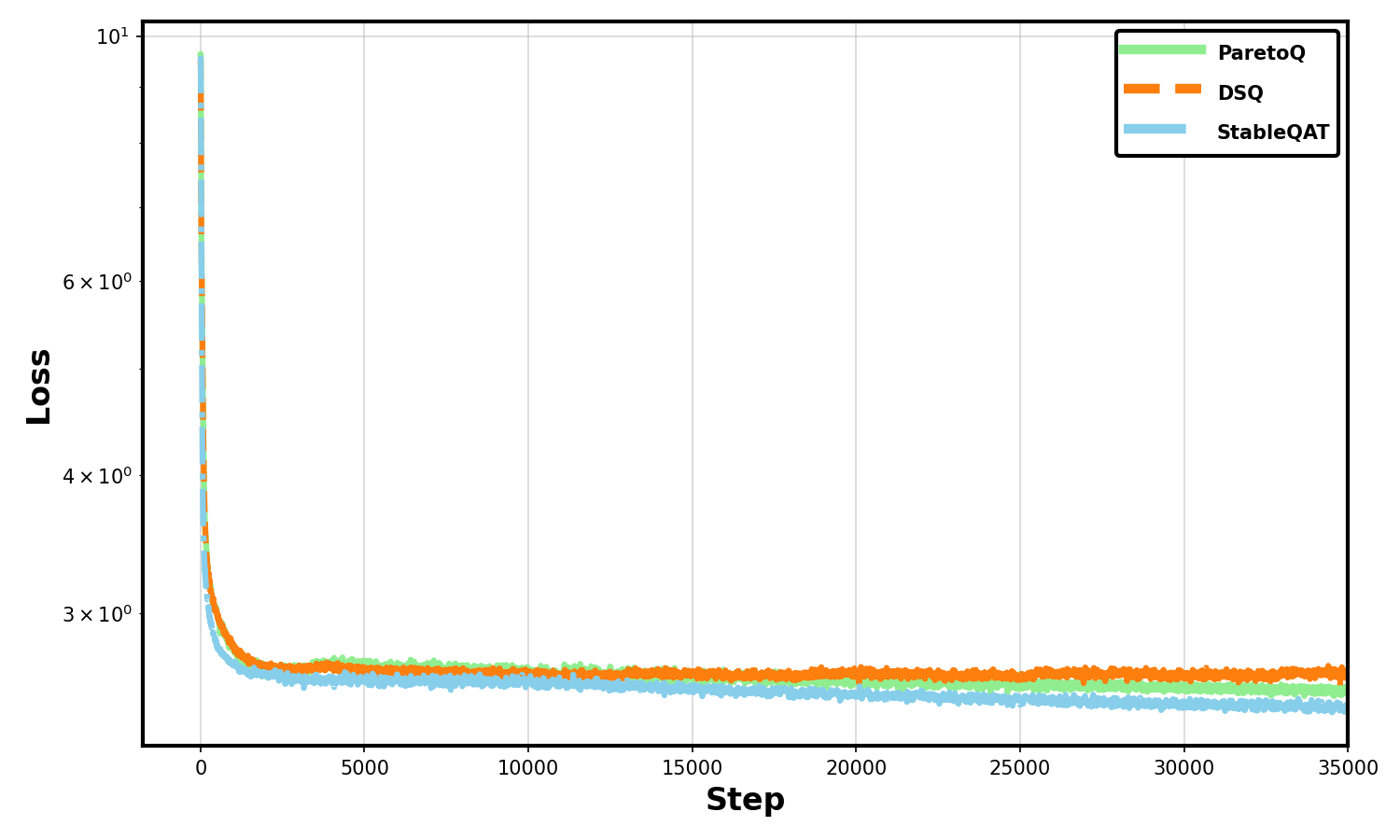}
        \includegraphics[width=0.48\linewidth]{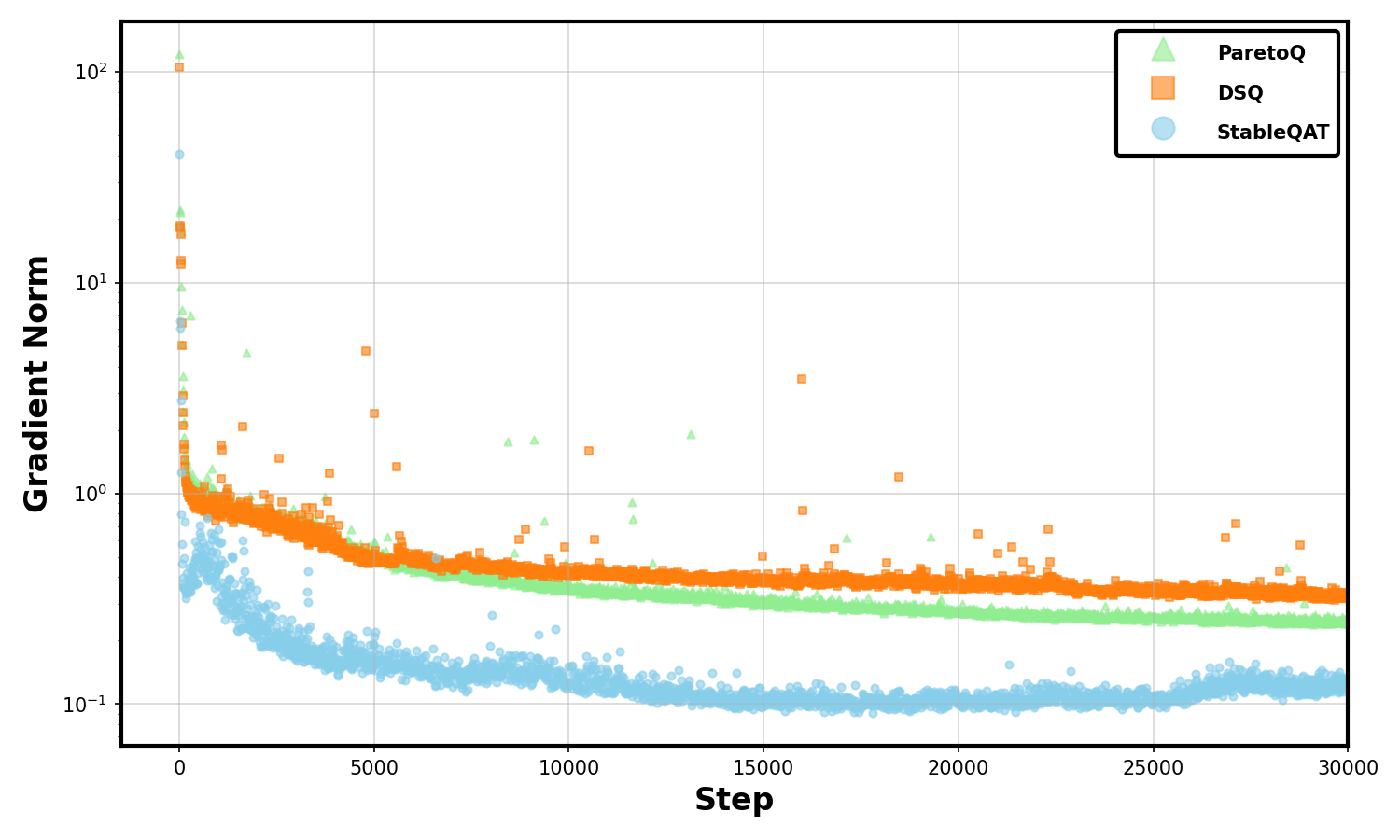}
        \caption{Learning rate $2\times 10^{-4}$.}
        % \vspace{-0.3em}
        \label{fig:train_loss_comparison_lr2e-4}
    \end{subfigure}
    \caption{Training loss (left) and gradient norm (right) comparison for Llama-3-1B under different learning rates.}
    \label{fig:train_loss_comparison}
\end{figure}

\begin{figure*}[h]
    \centering
    \begin{subfigure}[t]{0.32\linewidth}
        \centering
        \includegraphics[height=3.2cm]{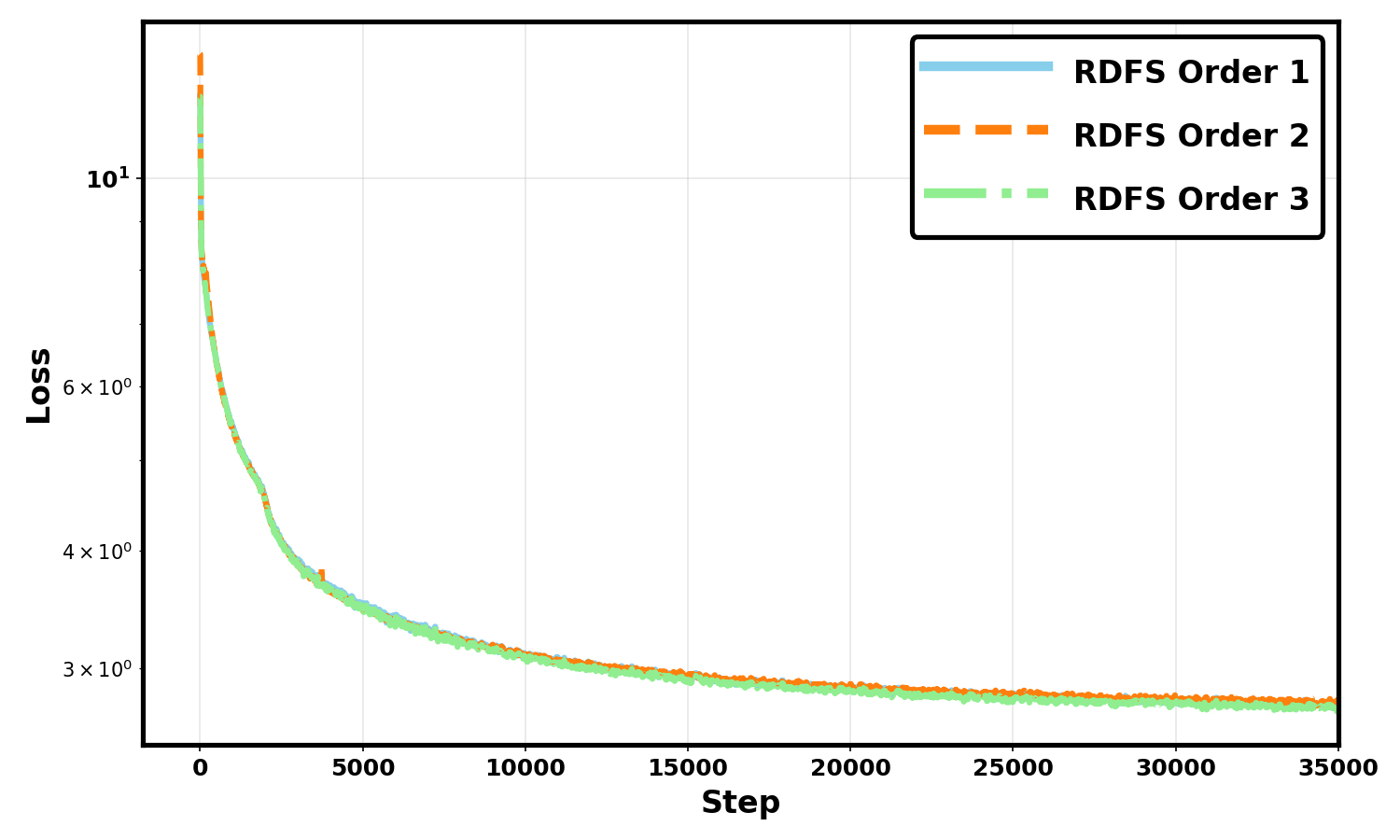}
        \caption{Training Loss}
        \label{fig:rdfs_loss}
    \end{subfigure}
    \begin{subfigure}[t]{0.32\linewidth}
        \centering
        \includegraphics[height=3.2cm]{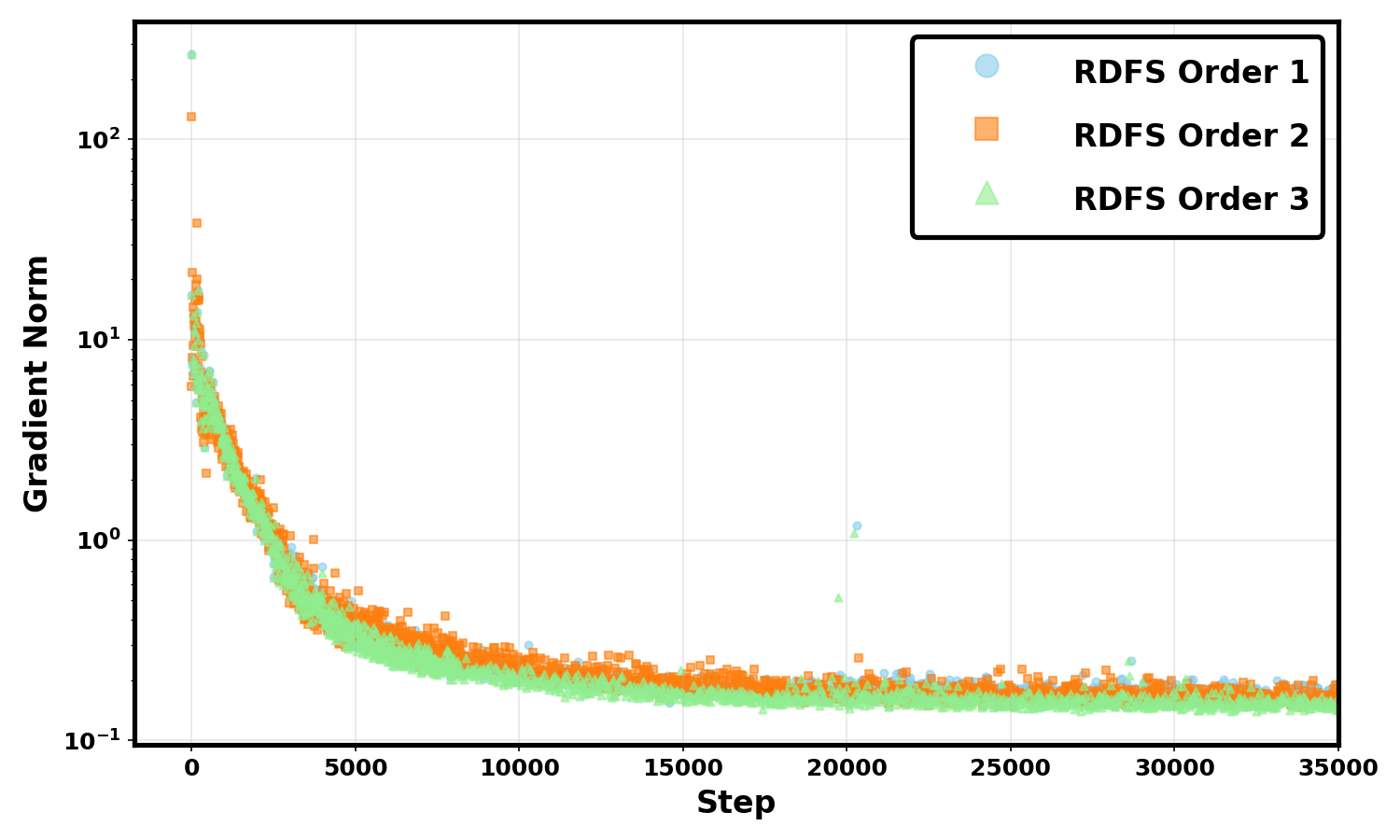}
        \caption{Gradient Norm}
        \label{fig:rdfs_gradnorm}
    \end{subfigure}
    \begin{subfigure}[t]{0.32\linewidth}
        \centering
        \includegraphics[height=3.2cm]{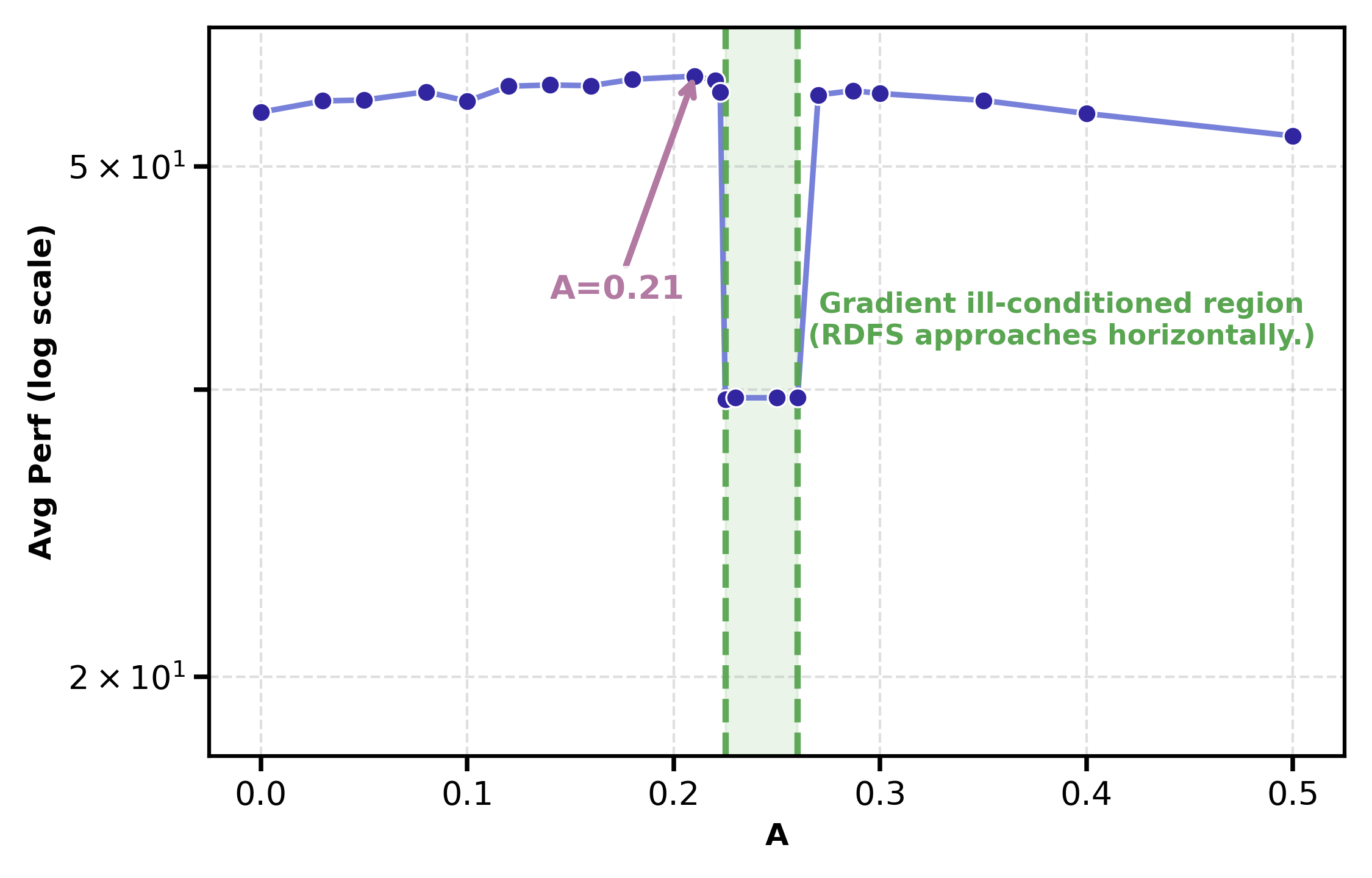}
        \caption{Amplitude Sensitivity}
        \label{fig:rdfs_amplitude}
    \end{subfigure}
    \caption{Training dynamics and amplitude ablation of RDFS.}
    \label{fig:rdfs_amp_orders_ablation}
\end{figure*}

\subsection{Training Stability Validation}

\paragraph{Gradient Dynamics and Convergence Behavior.} \autoref{fig:train_loss_comparison} provides a clear empirical validation of our theoretical analysis in Section~\ref{sec:theorem}. \algacro{} exhibits smooth and well-behaved optimization dynamics across learning rates, with steadily decreasing training loss and controlled gradient norms throughout training. This behavior is consistent with \autoref{thm:best_L2_ste}, which shows that our rotated damped Fourier surrogate achieves strictly smaller approximation error to the rotated rounding function than STE, resulting in a more faithful optimization direction and improved convergence. Moreover, \algacro{} maintains reliable and substantial gradient signals while effectively eliminating extreme gradient outliers. In contrast to DSQ, which typically display sharp gradient spikes and outliers inducing instability. The phenomenon is well aligned with~\autoref{thm:asymptotic_dsq}, the gradient variance of \algacro{} remains bounded, leading to gradual gradient-norm decay rather than explosion or premature vanishing. Together, these properties enable \algacro{} to converge to better optimum, demonstrating that its theoretical advantages translate directly into stable and reliable training behavior in practice.

\begin{wrapfigure}{r}{0.48\linewidth}
    \centering
    \vspace{-6pt}
    \hspace{-8pt}
    \includegraphics[width=\linewidth]{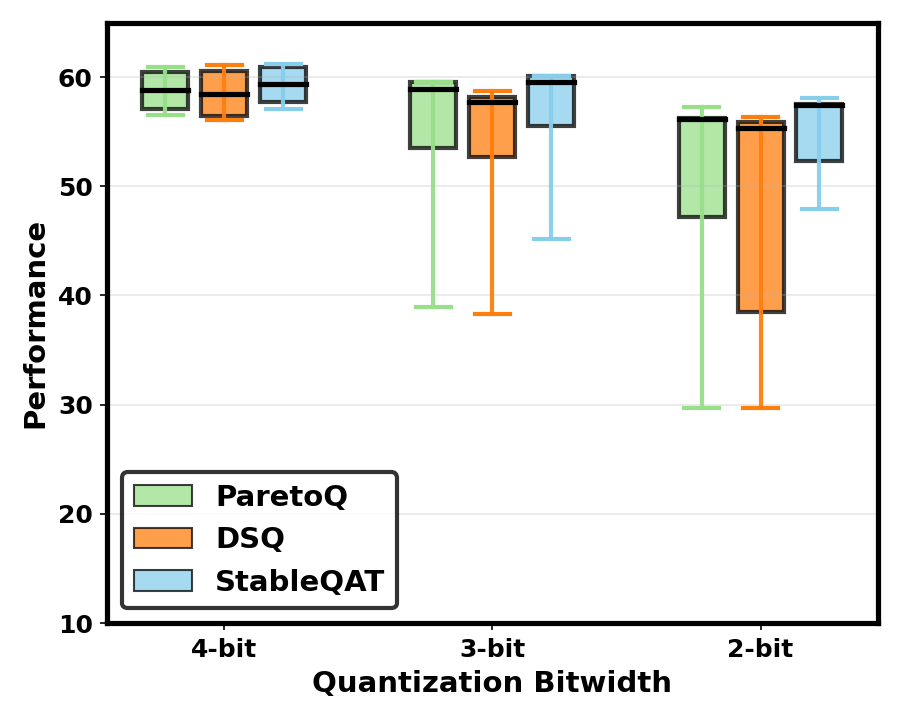}
    \vspace{-0.5em}
    \caption{Performance error bar for LLaMA-3-1B.}
    \label{fig:performance_error_bar_analysis}
    \vspace{-8pt}
\end{wrapfigure}
\paragraph{Performance Error Bar.} 
The performance error bars in~\autoref{fig:performance_error_bar_analysis}  characterize training robustness across multiple random seeds and learning-rate settings. For each method and bit-width, we report the mean performance together with its dispersion, thereby capturing sensitivity to optimization noise beyond single-run best results. 
\algacro{} consistently achieves the highest average performance with the tightest error bars, indicating reliable convergence under different hyperparameters, with the margin becoming more pronounced at lower bit-widths where optimization is particularly fragile. In contrast, ParetoQ exhibits both lower mean performance and larger variance. Its relies on the STE, which does not capture the intrinsic structure of the rounding operator, often yields noisy and misleading update directions. These misaligned gradients can push weights across quantization thresholds in an uncontrolled manner, increasing sensitivity, sometimes leading to training collapse. DSQ shows elevated variance for a different reason. Its sigmoid-style surrogate introduces regions of extremely large gradients near the transition boundaries. Such gradient explosion amplifies small perturbations during training, making optimization sensitive and introducing widened error bars.

\subsection{Ablations}

\paragraph{Fourier Order of \algacrofourier{}.}
We study the effect of Fourier truncation order $M$ in \algacrofourier{}. \autoref{fig:rdfs_loss} and \autoref{fig:rdfs_gradnorm} compare training dynamics across $M=0,1,2$. A first-order approximation ($M=0$) already captures the majority of performance and stability gains, yielding a smooth loss decay along with low and well-controlled gradient norms. While higher-order terms introduce finer structural details of the quantization operator, their empirical benefits are marginal and come with increased numerical complexity. These observations validate our design choice of adopting the first-order \algacrofourier{} as \autoref{eq:rdfs_first_order}, which offers an effective balance between approximation fidelity, training stability, and computational simplicity.

\paragraph{Amplitude.} We next vary the amplitude $A$ to control the sharpness of  surrogate gradient, to empirically validate the damped amplitude design predicted by the theoretical analysis in Section~\ref{sec:amplitude}. As shown in~\autoref{fig:rdfs_amplitude}, the empirical results closely align with the theoretical characterization and clearly reveal the presence of an ill-conditioned regime, where the surrogate curves become nearly tangential to horizontal plateaus.  In contrast, moderate amplitudes provide a favorable balance between approximation fidelity and optimization stability. The selected default setting $A = 0.21$ achieves strong performance, benefiting from an adequate approximation of rounding operator while maintaining stable and well-conditioned training dynamics.

\section{Conclusion}

We propose \algacro{} to address the training instability and suboptimal performance of quantization-aware training (QAT). \algacro{} employs a novel rotated damped Fourier surrogate (\algacrofourier{}) that captures the intrinsic structure of quantization while avoiding the gradient instability of conventional soft-quantization. We provide theoretical guarantees showing that \algacrofourier{} ensures bounded gradient variance and preserves admissible gradient magnitudes, enabling stable and effective optimization. Extensive experiments on large language models (LLMs) and vision transformers (ViTs) demonstrate consistent improvements in both peak and average performance under low-bit QAT.

\bibliography{reference}
\bibliographystyle{icml2026}

\appendix

\newpage 
\onecolumn

\section{More Related Works: Complementary but Orthogonal Views to \algacro{}}\label{appendix:more_related_works}

Recent works on post-training quantization (PTQ) focus on mitigating outliers and distribution skewness, which become increasingly problematic at lower bitwidths. Techniques such as function-preserving transformations and channel-wise clustering aim to reshape weight distributions so that uniform or centroid-based quantizers incur less error without modifying model parameters \cite{fptquant2025, skim2025,gu2025ultra}. QuaRot~\citep{ashkboos2024quarot}, SpinQuant~\citep{liu2024spinquant} assume that quantization error can be mitigated via orthogonal rotations that redistribute variance across dimensions. However, at ultra-low bitwidths, the dominant source of error shifts from distributional anisotropy to discretization-induced geometry collapse. The quantized space forms an extremely coarse discrete lattice in which many distinct directions become indistinguishable. Orthogonal rotations can redistribute variance but cannot increase the intrinsic angular resolution of the quantized space, and therefore cannot recover representational geometry lost through discretization. AWQ~\citep{lin2024awq} allocates quantization precision according to activation saliency by rescaling weight channels. This strategy better preserves semantically important subspaces and thus improves performance at moderate bitwidths. However, AWQ remains fundamentally limited by the discrete resolution of the quantized space: while it redistributes precision across channels, it cannot increase the number of representable states. Consequently, at ultra-low bitwidths, discretization-induced geometry collapse dominates, and channel-wise rescaling becomes insufficient to preserve fine-grained angular relationships in attention and representation spaces.

A complementary line of work mitigates attention outliers through architectural reformulation rather than weight rotation. Outlier-efficient attention mechanisms such as Softmax1-based formulations reinterpret attention from an associative memory perspective and suppress low-information tokens by introducing refined energy functions \cite{hu2024outlier}. Subsequent extensions apply similar mechanisms to improve robustness by replacing vanilla Softmax with outlier-efficient alternatives \cite{luo2025fast, luo2026frost}. While such architectural interventions can improve robustness to post-training quantization, they do not directly address the discretization-induced geometry collapse that arises when the representational lattice itself becomes too coarse. In contrast, \algacro{} focuses on stabilizing quantization-aware-training optimization under low-bit constraints by reformulating the rounding surrogate, thereby targeting the training-time gradient geometry rather than attention-level distribution shaping.

While most PTQ methods aim to approximate full precision model as accurately as possible, QAT instead tries to learn a new model under low-bit constraints. However, as bitwidth decreases, STE-based training often becomes unstable or biased. Several works focus on improving training efficiency and scalability of QAT rather than altering the quantization operator itself; examples include block-wise or staged training schemes that reduce computational cost, as well as data-free distillation strategies that remove dependence on large calibration datasets \cite{llmqat2024, efficientqat2025}. Another thread leverages self-distillation objectives to stabilize sub-4-bit settings \cite{bitdistiller2024}. 

A complementary line of work reformulates quantization as an approximation or decomposition problem. For instance, QBB represents weights as linear combinations of binary bases, replacing multiplications with additions while optimizing scales via distillation \cite{qbb2024,kim2023teacher}. LoRA-aware approaches such as LoftQ and IR-QLoRA integrate quantization with low-rank adaptation, either by alternating between quantization and decomposition or by explicitly preserving information through auxiliary connections \cite{loftq2024, irqlora2024}. Another body of work treats quantization through stochastic or noise-based perspectives, aiming to avoid hard discrete operators during training such as DiffQ which introduces pseudo quantization noise to approximate quantization effects \cite{diffq2022}. Similarly, stochastic differentiable quantization frameworks have bitwidths sampled probabilistically and optimized via reparameterization \cite{sdq2022}.

Differentiable and soft quantization methods more directly target the rounding operation by introducing continuous relaxations. DSQ is a representative example, utilizing parameterized smooth function whose sharpness increases during training \cite{gong2019dsq}. DSMM \cite{dsmm2024} introduces a differentiable soft min–max loss to reduce weight ranges by making max/min approximations differentiable and learnable.

Recent work pushes differentiable quantization further by explicitly optimizing bitwidths or noise scales as continuous variables. GDNSQ \cite{gdnsq2025} frames quantization as a smooth constrained optimization over effective bitwidths, combining STE-style estimators with gradual noise scheduling and distillation losses. Floating-point oriented frameworks \cite{difffpq2025} extend pseudo-quantization training to mixed floating-point formats, enabling differentiable optimization of mantissa and exponent precisions. Other works provide scaling laws and diagnostics to assess failure modes at low-bit regimes \cite{paretoq2025, beyondoutliers2025}. However, a key challenge in low-bit QAT that remains lies in reconciling discrete inference with stable, informative gradients.

\section{Additional Experimental Results on Vision Transformer}\label{appendix:vit}

\begin{table}[h]
\centering
\caption{VIT on ImageNet-1K.
``Bit-width (W/A)'' denotes the bitwidth for weights and activations.}
\label{tab:imagenet_qat}
\setlength{\tabcolsep}{3pt}
\scriptsize
% \resizebox{\linewidth}{!}{
\begin{tabular}{l | l | cc cc cc}
\toprule
Network & QAT Method &
\makecell[c]{Bit-width \\ (W/A)} & Top-1 &
\makecell[c]{Bit-width \\ (W/A)} & Top-1 &
\makecell[c]{Bit-width \\ (W/A)} & Top-1 \\
\midrule

\multirow{5}{*}{\makecell[l]{DeiT-T \\ (FP Top-1: 73.8)}}
& Q-ViT       & 4/4$^\dagger$ & \underline{72.79} & 3/3$^\dagger$ & \underline{69.62} & -- & -- \\
& GPUSQ-ViT    & 4/4            & 71.70 & --             & --    & -- & -- \\
& PackQViT   & 4/4            & 72.70 & --             & --    & -- & -- \\
& LSQ+                  &  4/4            & 72.62 & 3/3            & 68.22 & 2/2 & \underline{54.45} \\
& \cellcolor{gray!15}\textbf{\algacro{}} & \cellcolor{gray!15}4/4 & \cellcolor{gray!15}\textbf{73.08} & \cellcolor{gray!15}3/3 & \cellcolor{gray!15}\textbf{69.80} & \cellcolor{gray!15}2/2 & \cellcolor{gray!15}\textbf{56.31} \\
\midrule

% \multirow{2}{*}{\makecell[l]{SReT-T \\ (FP Top-1: 75.8)}}
% & LSQ+                  & 4/4 & 75.65 & 3/3 & 72.59 & 2/2 & 62.11 \\
% % & Baseline + KD             & 4/4 & 76.13 & 3/3 & 74.20 & 2/2 & 64.98 \\
% % & {VAQ} & 4/4 & {76.99}  & 3/3 & {75.40} & 2/2 & {67.53}  \\
% & \cellcolor{gray!15}\textbf{\algacro{}} & \cellcolor{gray!15}4/4 & \cellcolor{gray!15}\textbf{} ($\uparrow$) & \cellcolor{gray!15}3/3 & \cellcolor{gray!15}\textbf{} ($\uparrow$) & \cellcolor{gray!15}2/2 & \cellcolor{gray!15}\textbf{} ($\uparrow$) \\
% \midrule

\multirow{6}{*}{\makecell[l]{Swin-T \\ (FP Top-1: 81.0)}}
& Q-ViT       &  4/4$^\dagger$ & 80.59 & 3/3$^\dagger$ & 79.45 & -- & -- \\
& GPUSQ-ViT    &  4/4            & 80.70 & --             & --    & -- & -- \\
& PackQViT   &  4/4            & 81.50 & --             & --    & -- & -- \\
& Li et al.  &  4/4            & \textbf{82.10} & 3/3            & \underline{80.57} & 2/2 & \underline{74.31} \\
& LSQ+                  &  4/4            & 80.61 & 3/3            & 79.07 & 2/2 & 70.21 \\
% & {VAQ} &  4/4 & {82.62}  & 3/3 & {81.37} & 2/2 & {77.66}  \\
& \cellcolor{gray!15}\textbf{\algacro{}} &  \cellcolor{gray!15}{4/4} & \cellcolor{gray!15}\underline{81.98} & \cellcolor{gray!15}{3/3} & \cellcolor{gray!15}\textbf{81.02} & \cellcolor{gray!15}{2/2} & \cellcolor{gray!15}\textbf{75.86} \\
\bottomrule
\end{tabular}
% }
\vspace{0.5em}
\footnotesize\\
$^\dagger$ Average bitwidth for mixed-precision quantization.
\end{table}

To demonstrate the generality of \algacro{} beyond LLMs, we further evaluate it on representative Vision Transformer architectures. Following prior QAT studies~\citep{huang2023quantization,qu2025automatic}, we consider DeiT~\citep{touvron2021training} and Swin~\citep{liu2021swin}. We compare \algacro{} against several state-of-the-art ViT quantization methods, including Q-ViT~\citep{li2022q}, LSQ+~\citep{bhalgat2020lsqplus}, PackQViT~\citep{dong2023packqvit} and GPUSQ-ViT~\citep{yu2023boost}, on the ImageNet-1k benchmark~\citep{deng2009imagenet}. The results are present in \autoref{tab:imagenet_qat}. We adopted \algacro{} upon the code-base of \citep{huang2023quantization} while exclude knowledge distillation for fair comparison. Numerical results present that \algacro{} consistently brings performance again across varying bits and ViTs, demonstrating its generality and transferability across different applications and architectures.

\section{Rotated Damped Fourier Surrogate Derivation}\label{appendix:fourier_rounding}

\paragraph{Fourier series derivation.} 
We first derive the Fourier series expansion of the zig-zag function~\eqref{eq:rotated_round_func} written as
\begin{equation*}
f(t)=\frac{1}{2\sqrt{2}} \left(1-4\left| r(t)-\frac12 \right|\right),
\end{equation*}
where $r(t) = \left\{\frac{t - T/4}{T}\right\}$, $T = \sqrt{2}$ is the period, and $\{\cdot\}$ denotes the fractional part function. Since $f(t)$ is square integrable and periodic, it admits a Fourier series~\cite{stein2011fourier}:
\begin{equation*}
f(t) = a_0 + \sum_{k=1}^{\infty} \left[a_k \cos\!\left(\frac{2\pi k t}{T}\right) + b_k \sin\!\left(\frac{2\pi k t}{T}\right)\right].
\end{equation*}
Due to the phase shift by $T/4$ in the definition of $r(t)$, the function $f(t)$ is odd, i.e., $f(-t) = -f(t)$. Consequently, $a_k = 0$ for all $k\in \mathbb{N}$, leaving only sine terms:
\begin{equation*}
f(t)=\sum_{k=1}^{\infty} b_k \sin\!\left(\frac{2\pi k}{T}t\right).
\end{equation*}
The Fourier coefficients, computed via $b_k = \frac{2}{T}\int_0^T f(t)\,\sin\!\left(\frac{2\pi k t}{T}\right) dt$, are given by for all $m\in \mathbb{N}$,
\begin{equation*}
b_{2m+1} = -\frac{2\sqrt{2}}{\pi^2} \cdot \frac{(-1)^m}{(2m+1)^2}, \quad b_{2m+2} = 0.
\end{equation*}
Thus, only odd harmonics contribute, and the zig-zag function admits the sine-series expansion
\begin{equation*}
f(t) = -\frac{2\sqrt{2}}{\pi^2} \sum_{m=0}^{\infty} \frac{(-1)^m}{(2m+1)^2} \sin\bigl((2m+1)\sqrt{2}\pi\,t\bigr).
\end{equation*}
To obtain a tunable surrogate, we replace the fixed amplitude $\frac{2\sqrt{2}}{\pi^2}$ with a learnable parameter $A$:
\begin{equation*}
f(t) = -A \sum_{m=0}^{\infty} \frac{(-1)^m}{(2m+1)^2} \sin\bigl((2m+1)\sqrt{2}\pi\,t\bigr),
\end{equation*}
with derivative
\begin{equation}\label{eq:fourier-series-derivative}
f'(t) = -A\sqrt{2}\pi \sum_{m=0}^{\infty} \frac{(-1)^m}{2m+1} \cos\bigl((2m+1)\sqrt{2}\pi\,t\bigr).
\end{equation}

\paragraph{Rotation to the rounding function.} 
The key observation is that rotating the zig-zag function by $45^\circ$ counterclockwise yields the staircase rounding function. To see this, we apply the standard rotation matrix to each point $(t, f(t))$ on the graph:
\begin{equation}\label{eq:parameterized-curve}
\begin{pmatrix} x \\ x_q \end{pmatrix}
= \frac{1}{\sqrt{2}}
\begin{pmatrix} 1 & -1 \\ 1 & 1 \end{pmatrix}
\begin{pmatrix} t \\ f(t) \end{pmatrix}
= \left( \frac{t - f(t)}{\sqrt{2}},\; \frac{t + f(t)}{\sqrt{2}} \right)^T,
\end{equation}
where $x$ represents the input and $x_q$ represents the quantized (rounded) output. Inverting this relationship gives $t = (x + x_q)/\sqrt{2}$.

\paragraph{Gradient surrogate derivation.} 
Using the parameterization in Equation~\eqref{eq:parameterized-curve} and the chain rule, we have
\begin{equation}\label{eq:parameterized-curve-derivative}
\frac{\partial x_q}{\partial x}
= \frac{\partial x_q / \partial t}{\partial x / \partial t}
= \frac{1 + f'(t)}{1 - f'(t)}.
\end{equation}
Substituting the Fourier series derivative from Equation~\eqref{eq:fourier-series-derivative} and using $t = (x + x_q)/\sqrt{2}$, we obtain
\begin{equation*}
\frac{\partial x_q}{\partial x}
= \frac{
1 - A\sqrt{2}\pi \displaystyle\sum_{m=0}^{\infty} \frac{(-1)^m}{2m+1} \cos\bigl((2m+1)\pi (x + x_q)\bigr)
}{
1 + A\sqrt{2}\pi \displaystyle\sum_{m=0}^{\infty} \frac{(-1)^m}{2m+1} \cos\bigl((2m+1)\pi (x + x_q)\bigr)
}.
\end{equation*}

\section{Proof of Theorem~\ref{thm:best_L2_ste}}\label{appendix:thm5_1}
For clarity in the proof, we restate Theorem~\ref{thm:best_L2_ste} using precise mathematical notation.
\begin{theorem}
Let $f \in L^2([0,T])$ and let $f_n$ be the $n$th partial Fourier sum of the function $f$. Define the function space of trigonometric polynomials of degree at most $n$ by
\begin{equation*}
\Gcal_n = \operatorname{span}\!\left\{
1,\,
\cos\!\left(\frac{2\pi k u}{T}\right),\,
\sin\!\left(\frac{2\pi k u}{T}\right)
\,:\,
1 \le k \le n
\right\}
\end{equation*}
(i) $f_n$ is the unique minimizer of the $L^2$ approximation error over the space $\Gcal_n$, i.e., 
\begin{equation*}
f_n=\operatorname*{arg\,min}_{g \in \Gcal_n}
\|f - g\|_{L^2}.
\end{equation*}
(ii) For all $n \in \mathbb{N}\backslash \{0\}$, the strict inequality
\begin{equation*}
\|f - f_n\|_{L^2} < \|f - f_0\|_{L^2}
\end{equation*}
holds if and only if $f$ is not equal to constant almost everywhere on $[0,T]$.
\end{theorem}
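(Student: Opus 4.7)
The plan is to treat both parts through the lens of orthogonal projection in the Hilbert space $L^2([0,T])$ equipped with the usual inner product $\langle f, g \rangle = \int_0^T f(t) g(t)\, dt$. The trigonometric system $\{1,\,\cos(2\pi k u/T),\,\sin(2\pi k u/T) : 1 \le k \le n\}$ is mutually orthogonal under this inner product (by elementary integration of products of sines and cosines over a full period), so $\mathcal{G}_n$ is a finite-dimensional subspace of $L^2([0,T])$ equipped with an explicit orthogonal basis.

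For part (i), I would invoke the classical Hilbert space best-approximation theorem: for any closed subspace $V$ of a Hilbert space $\mathcal{H}$ and any $f \in \mathcal{H}$, there is a unique minimizer of $\|f - g\|$ over $g \in V$, given by the orthogonal projection $P_V f$. Applying this to $V = \mathcal{G}_n$ yields existence and uniqueness of the minimizer in one stroke. To identify the minimizer with $f_n$, I would expand $P_{\mathcal{G}_n} f$ in the orthogonal basis above via the standard formula $P_{\mathcal{G}_n} f = \sum_{\phi} \frac{\langle f, \phi \rangle}{\langle \phi, \phi \rangle} \phi$; a direct computation recovers the classical Fourier coefficient formulas, showing $P_{\mathcal{G}_n} f = f_n$.

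For part (ii), the key tool is the Pythagorean identity arising from the orthogonality of the residual $f - f_n$ to $\mathcal{G}_n$. Since $f_n - f_0 \in \mathcal{G}_n$, it follows that
\begin{equation*}
\|f - f_0\|_{L^2}^2 = \|f - f_n\|_{L^2}^2 + \|f_n - f_0\|_{L^2}^2.
\end{equation*}
Hence the strict inequality is equivalent to $\|f_n - f_0\|_{L^2} > 0$, which is in turn equivalent to at least one Fourier coefficient $a_k$ or $b_k$ with $1 \le k \le n$ being nonzero. The bridge to ``non-constant almost everywhere'' then uses completeness of the trigonometric system in $L^2([0,T])$: if every $a_k, b_k$ with $k \ge 1$ vanished, Parseval would force $f = a_0$ almost everywhere, while non-constancy forces at least one such coefficient to be nonzero.

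The main obstacle lies in the quantifier structure of part (ii). Completeness of the Fourier basis guarantees that non-constancy forces \emph{some} coefficient at frequency $k \ge 1$ to be nonzero, but not necessarily within the window $\{1, \ldots, n\}$ for a specifically fixed $n$. This gap is benign for the zig-zag function derived in Appendix~\ref{appendix:fourier_rounding}, whose Fourier expansion has a nonzero coefficient already at $k = 1$, so the biconditional holds for every $n \ge 1$ in the regime of interest; in the fully general statement, the cleanest resolution is to interpret the universal quantifier as ``there exists $n \ge 1$'' or to restrict attention to $f$ whose fundamental harmonic is excited, and I would make this assumption explicit in the final write-up rather than attempt to prove a stronger claim than holds.
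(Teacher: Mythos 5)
Your argument is correct and is essentially the paper's own proof in abstract clothing: Appendix~\ref{appendix:thm5_1} carries out by hand exactly the orthogonal-projection computation you invoke, expanding $\|f-g\|_{L^2}^2$ via the orthogonality relations and completing the square to obtain $\|f-g\|_{L^2}^2 = \|f-f_n\|_{L^2}^2 + T(\alpha_0-a_0)^2 + \tfrac{T}{2}\sum_{k=1}^n\bigl[(\alpha_k-a_k)^2+(\beta_k-b_k)^2\bigr]$, which is your Pythagorean identity written in coordinates; part (ii) then reduces, in both treatments, to $\|f-f_0\|_{L^2}^2 - \|f-f_n\|_{L^2}^2 = \tfrac{T}{2}\sum_{k=1}^n(\alpha_k^2+\beta_k^2)$. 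Citing the Hilbert-space best-approximation theorem saves the explicit coefficient algebra but buys nothing beyond brevity here, since the basis is already orthogonal and the computation is elementary.

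The caveat you raise about the quantifier structure of part (ii) is not a defect of your write-up; it is a genuine flaw in the theorem as stated, and the paper's own proof commits precisely the error you anticipate. The paper correctly shows that $f$ is constant a.e.\ if and only if $\alpha_k=\beta_k=0$ for \emph{all} $k\ge 1$, but then silently identifies this with the condition that $\alpha_1,\beta_1,\dots,\alpha_n,\beta_n$ all vanish for a \emph{fixed} $n$ --- the step ``it remains to show that at least one of $\alpha_1,\dots,\beta_n$ is non-zero is equivalent to $f$ being non-constant'' is exactly where the argument breaks. For general $f\in L^2([0,T])$ the ``if'' direction fails: $f(u)=\cos(4\pi u/T)$ is non-constant yet has $\alpha_1=\beta_1=0$, so $f_1=f_0=0$ and $\|f-f_1\|_{L^2}=\|f-f_0\|_{L^2}$, violating the claimed strict inequality at $n=1$. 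Your proposed repair --- observing that the rotated rounding function has $b_1=-2\sqrt{2}/\pi^2\neq 0$, so its fundamental harmonic is excited and the strict inequality does hold for every $n\ge 1$ in the only case the paper uses --- is the correct resolution, and your proof is in this respect more careful than the one in the paper.
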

\begin{proof}
We first prove part (i). The Fourier coefficients of $f$ on the interval $[0,T]$ are defined by
\begin{equation}\label{eq:fourier_inner_products}
\begin{aligned}
\alpha_0 &= \frac{1}{T} \int_0^T f(u)\,du, \\
\alpha_k &= \frac{2}{T} \int_0^T
f(u)\cos\!\left(\frac{2\pi k u}{T}\right)du, \text{for } k\geq 1, \\
\qquad
\beta_k &= \frac{2}{T} \int_0^T f(u)\sin\!\left(\frac{2\pi k u}{T}\right)du, \text{for } k\geq 1.
\end{aligned}
\end{equation}

Note that we have, for $j, k \geq 1$,
\begin{equation}\label{eq:sin_cos_relation}
\int_0^T \cos\!\left(\frac{2\pi j u}{T}\right)\cos\!\left(\frac{2\pi k u}{T}\right)du = \frac{T}{2}\delta_{jk}, \quad 
\int_0^T \sin\!\left(\frac{2\pi j u}{T}\right)\sin\!\left(\frac{2\pi k u}{T}\right)du = \frac{T}{2}\delta_{jk}, \quad 
\int_0^T \cos\!\left(\frac{2\pi j u}{T}\right)\sin\!\left(\frac{2\pi k u}{T}\right)du = 0,
\end{equation}
where $\delta_{jk}$ is the Kronecker delta and it is equal to $1$ if $j=k$ and $0$ otherwise.

Now let $g \in \Gcal_n$ be an arbitrary trigonometric polynomial of the form
\[
g(u) = a_0 + \sum_{k=1}^n \left[a_k \cos\!\left(\frac{2\pi k u}{T}\right) + b_k \sin\!\left(\frac{2\pi k u}{T}\right)\right].
\]
Expanding the squared $L^2$ norm error $\|f - g\|_{L^2}^2$, we have that
\begin{equation}\label{eq:error_expansion}
\|f - g\|_{L^2}^2 = \|f\|_{L^2}^2 - 2\langle f, g \rangle_{L^2} + \|g\|_{L^2}^2.
\end{equation} 
For the inner product $\langle f, g \rangle_{L^2}$ in~\eqref{eq:error_expansion}, we have from~\eqref{eq:fourier_inner_products} that
\begin{equation}\label{eq:inner_product_f_g}
\begin{aligned}
\langle f, g \rangle_{L^2} &= \int_0^T f(u) \left[a_0 + \sum_{k=1}^n \left(a_k \cos\!\left(\frac{2\pi k u}{T}\right) + b_k \sin\!\left(\frac{2\pi k u}{T}\right)\right)\right] du \\
&= a_0 \int_0^T f(u)\, du + \sum_{k=1}^n \left[a_k \int_0^T f(u) \cos\!\left(\frac{2\pi k u}{T}\right) du + b_k \int_0^T f(u) \sin\!\left(\frac{2\pi k u}{T}\right) du\right] \\
&= T\alpha_0 a_0 + \frac{T}{2}\sum_{k=1}^n (\alpha_k a_k + \beta_k b_k). 
\end{aligned}
\end{equation}
For the squared norm $\|g\|_{L^2}^2$ in~\eqref{eq:error_expansion}, we have from~\eqref{eq:sin_cos_relation} that
\begin{equation}\label{eq:g_norm_squared}
\begin{aligned}
\|g\|_{L^2}^2 =& \int_0^T \left[a_0 + \sum_{k=1}^n \left(a_k \cos\!\left(\frac{2\pi k u}{T}\right) + b_k \sin\!\left(\frac{2\pi k u}{T}\right)\right)\right]^2 du. \\
= &a_0^2 \cdot T + \sum_{k=1}^n \left[a_k^2 \cdot \frac{T}{2} + b_k^2 \cdot \frac{T}{2}\right] = Ta_0^2 + \frac{T}{2}\sum_{k=1}^n (a_k^2 + b_k^2)
\end{aligned}
\end{equation}
It then follows from~\eqref{eq:error_expansion},~\eqref{eq:inner_product_f_g} and~\eqref{eq:g_norm_squared} that
\begin{equation}\label{eq:error_formula}
\begin{aligned}
\|f - g\|_{L^2}^2 &= \|f\|_{L^2}^2 - 2\left[T\alpha_0 a_0 + \frac{T}{2}\sum_{k=1}^n (\alpha_k a_k + \beta_k b_k)\right] + Ta_0^2 + \frac{T}{2}\sum_{k=1}^n (a_k^2 + b_k^2) \\
&= \|f\|_{L^2}^2 + T(a_0^2 - 2\alpha_0 a_0) + \frac{T}{2}\sum_{k=1}^n \left[(a_k^2 - 2\alpha_k a_k) + (b_k^2 - 2\beta_k b_k)\right] \\
&= \|f\|_{L^2}^2 - T\alpha_0^2 - \frac{T}{2}\sum_{k=1}^n (\alpha_k^2 + \beta_k^2) + T(\alpha_0 - a_0)^2 + \frac{T}{2}\sum_{k=1}^n \left[(\alpha_k - a_k)^2 + (\beta_k - b_k)^2\right]. 
\end{aligned}
\end{equation}
Recall that the $n$th partial Fourier sum is
\begin{equation}\label{eq:partial_fourier_sum}
f_n(u) = \alpha_0 + \sum_{k=1}^n \left[\alpha_k \cos\!\left(\frac{2\pi k u}{T}\right) + \beta_k \sin\!\left(\frac{2\pi k u}{T}\right)\right]. 
\end{equation}
Replacing $g$ in $\|f - g\|_{L^2}^2$ with $f_n$ given in~\eqref{eq:partial_fourier_sum}, we have that
\begin{equation}\label{eq:L2_error_fn}
\|f - f_n\|_{L^2}^2 = \|f\|_{L^2}^2 - T\alpha_0^2 - \frac{T}{2}\sum_{k=1}^n (\alpha_k^2 + \beta_k^2).
\end{equation}
Substituting~\eqref{eq:L2_error_fn} back into~\eqref{eq:error_formula}, we obtain
\begin{equation}\label{eq:final_error_form}
\|f - g\|_{L^2}^2 = \|f - f_n\|_{L^2}^2 + T(\alpha_0 - a_0)^2 + \frac{T}{2}\sum_{k=1}^n \left[(\alpha_k - a_k)^2 + (\beta_k - b_k)^2\right].
\end{equation}
Since all terms on the right-hand side of~\eqref{eq:final_error_form} beyond $\|f - f_n\|_{L^2}^2$ are non-negative, it follows that $\|f - g\|_{L^2}^2 \geq \|f - f_n\|_{L^2}^2$ for all $g \in \mathcal{G}_n$. Moreover, equality holds if and only if $\alpha_0 = a_0$, $\alpha_k = a_k$ and $\beta_k = b_k$ for all $k \in \{1, \cdots, n\}$. This establishes that $f_n$ is the unique minimizer of the $L^2$ approximation error over $\mathcal{G}_n$, which completes the first part of the proof.

We next prove part (ii). In the first part of the proof, we have shown that
\[
\|f - f_n\|_{L^2}^2 = \|f\|_{L^2}^2 - T\alpha_0^2 - \frac{T}{2}\sum_{k=1}^n (\alpha_k^2 + \beta_k^2).
\]
In particular, for $n = 0$, this gives $\|f - f_0\|_{L^2}^2 = \|f\|_{L^2}^2 - Ta_0^2$. Taking the difference between $\|f - f_0\|_{L^2}^2$ and $\|f - f_n\|_{L^2}^2$, we obtain
\begin{equation}\label{eq:difference}
\|f - f_0\|_{L^2}^2 - \|f - f_N\|_{L^2}^2 = \frac{T}{2}\sum_{k=1}^N (\alpha_k^2 + \beta_k^2).
\end{equation}
Since each term at the right-hand side of equation~\eqref{eq:difference} is non-negative, we have $\|f - f_N\|_{L^2} \leq \|f - f_0\|_{L^2}$ for all $N \geq 1$, with strict inequality if and only if at least one of $\alpha_1, \beta_1, \ldots, \alpha_n, \beta_n$ is non-zero. 

It then remains to show that at least one of $\alpha_1, \beta_1, \ldots, \alpha_n, \beta_n$ is non-zero is equivalent to the statement that $f$ is not equal to constant almost everywhere on $[0,T]$. We will prove the contrapositive statement. First recall that the constant functions on $[0,T]$ form the one-dimensional subspace
\[
\operatorname{span}\{1\} \subset L^2([0,T]).
\]
Suppose that $f$ is equal to a constant almost everywhere on $[0,T]$. Since $\cos(2\pi k u/T)$ and $\sin(2\pi k u/T)$ are orthogonal to constant functions in the space $L^2([0,T])$, $\alpha_k = \beta_k = 0$ for all $k \geq 1$. It then follows that $f(u)=\alpha_0$ almost everywhere. Conversely, suppose that for all $k \geq 1$, $\alpha_k = \beta_k = 0$. Then the Fourier series of $f$ reduces to the constant term $\alpha_0$, and the partial Fourier sums satisfy that for all $n \in \mathbb{N}$,
\[
f_n(u) = \alpha_0.
\]
Since $f_n \to f \in L^2([0,T])$, it follows that $f = \alpha_0$ almost everywhere on $[0,T]$. Therefore, at least one of $\alpha_1,\beta_1,\ldots,\alpha_n,\beta_n$ being nonzero is equivalent to $f$ not being almost everywhere equal to a constant function on $[0,T]$. This completes the proof of part~(ii) and hence the theorem.

\end{proof}

\section{Proof of Theorem~\ref{thm:asymptotic_dsq}}\label{appendix:thm5_2}
To prove Theorem~\ref{thm:asymptotic_dsq}, we proceed in three steps. First, we derive the DSQ gradient, as stated in Lemma~\ref{lemma:dsq_gradient}. Next, we establish the expectation and variance formulas for the DSQ and StableQAT schemes, respectively, which are given in Lemma~\ref{lem:exp-var}. Finally, we complete the proof of Theorem~\ref{thm:asymptotic_dsq} by combining these results.

\paragraph{DSQ details.} To address the non-differentiability of standard quantization, DSQ introduces a smooth asymptotic function $\phi(\cdot)$ that approximates each step of the uniform staircase quantizer. Specifically, the clipping range $[l, u]$ is uniformly partitioned into $2^b - 1$ intervals with width as $\Delta = \frac{u - l}{2^b - 1}$ and the $i$-th quantization interval defined as 
\begin{equation*}
P_i = [\, l + i\Delta,\; l + (i+1)\Delta), i \in \{0, \dots, 2^b - 2\}.
\end{equation*}
For an input $x \in P_i$, the smooth approximation is given by
\begin{equation}\label{eq:smooth-approx}
\phi(x) = s \cdot \tanh\big(k(x - m_i)\big),
\end{equation}
where $m_i=l + (i + \frac{1}{2})\Delta$ denotes the midpoint of $i$-th quantization interval $P_i$, and scaling factor
$s=\frac{1}{\tanh ((k\Delta)/2)}$ensures continuity of $\phi(\cdot)$ across adjacent intervals. The parameter $k$ controls the sharpness of the approximation, with larger values yielding a closer match to the hard staircase quantizer. To allow learnable control over the quantization sharpness, DSQ introduces a variable $\alpha \in (0,1)$, which measures the approximation gap between soft and hard quantization and is defined as
\begin{equation}\label{eq:alpha-parameterize}
\alpha = 1 - \tanh\left(\tfrac{1}{2}k\Delta\right).
\end{equation}
Under this parameterization, the asymptotic function is fully determined by $\alpha$ and $\Delta$ with
\begin{equation}\label{eq:relation-s-k-alpha}
s = \frac{1}{1 - \alpha},
\qquad
k = \frac{1}{\Delta}\ln\left(\frac{2 - \alpha}{\alpha}\right).
\end{equation}
Combining~\eqref{eq:alpha-parameterize} and~\eqref{eq:relation-s-k-alpha} allows us to have the identity
\begin{equation}\label{eq:alpha-identity}
\alpha = 1 - \tanh\left(\frac{1}{2}\ln\left(\frac{2 - \alpha}{\alpha}\right)\right).
\end{equation}
Based on the asymptotic function $\phi(\cdot)$, the differentiable soft quantization (DSQ) function is defined as
\begin{equation}\label{eq:dsq-def}
Q_{\mathrm{DSQ}}(x) =
\begin{cases}
l, & x < l, \\
u, & x > u, \\
l + \Delta\left(i + \dfrac{\phi(x) + 1}{2}\right), & x \in P_i.
\end{cases}
\end{equation}

Having established the DSQ formulation, we first derive the gradient of the DSQ function.
\begin{lemma}[DSQ Gradient]
\label{lemma:dsq_gradient}
Given the input $x \in P_i$, the gradient of the differentiable soft quantization function $Q_{\mathrm{DSQ}}(x)$ defined in~\eqref{eq:dsq-def} is
\begin{equation}\label{eq:dsq-grad}
\gdsq(x) = \frac{\partial Q_{\mathrm{DSQ}}(x)}{\partial x} = \frac{\ln\left(\frac{2-\alpha}{\alpha}\right)}{2(1-\alpha)} \cdot \mathrm{sech}^2\left(\frac{\ln\left(\frac{2-\alpha}{\alpha}\right)}{\Delta}(x - m_i)\right)
\end{equation}
\end{lemma}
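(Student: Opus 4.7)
The plan is to obtain \gdsq{} by a direct chain-rule computation, starting from the piecewise definition of $Q_{\mathrm{DSQ}}$ on the interval $P_i$ and then substituting the $\alpha$-parameterization of $(s,k)$ given in~\eqref{eq:relation-s-k-alpha}.

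First I would restrict attention to the interior case $x \in P_i$, where~\eqref{eq:dsq-def} gives $Q_{\mathrm{DSQ}}(x) = l + \Delta\bigl(i + \tfrac{\phi(x)+1}{2}\bigr)$. Since $l$, $\Delta$, and $i$ are all constant in $x$, differentiation yields $\partial_x Q_{\mathrm{DSQ}}(x) = \tfrac{\Delta}{2}\,\phi'(x)$. Next I would differentiate the smooth asymptotic function $\phi(x) = s\tanh(k(x - m_i))$ from~\eqref{eq:smooth-approx}; using $\tfrac{d}{dy}\tanh(y)=\mathrm{sech}^2(y)$ and noting that $s$, $k$, and $m_i$ are independent of $x$, one obtains $\phi'(x) = sk\,\mathrm{sech}^2\bigl(k(x-m_i)\bigr)$.

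The remaining step is to plug in the $\alpha$-parameterized expressions $s = \tfrac{1}{1-\alpha}$ and $k = \tfrac{1}{\Delta}\ln\!\bigl(\tfrac{2-\alpha}{\alpha}\bigr)$ from~\eqref{eq:relation-s-k-alpha}. This gives
\begin{equation*}
\partial_x Q_{\mathrm{DSQ}}(x)
= \frac{\Delta}{2}\cdot \frac{1}{1-\alpha}\cdot \frac{1}{\Delta}\ln\!\left(\frac{2-\alpha}{\alpha}\right)\cdot \mathrm{sech}^2\!\left(\frac{\ln\!\left(\tfrac{2-\alpha}{\alpha}\right)}{\Delta}(x-m_i)\right),
\end{equation*}
and the $\Delta$ factors cancel, producing exactly the claimed expression~\eqref{eq:dsq-grad}.

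There is no genuinely hard step here: the argument is a one-line chain rule plus a substitution. The only points that require a little care are (i) tracking that the prefactor simplification relies on the exact cancellation of $\Delta$ between the $\tfrac{\Delta}{2}$ from differentiating $Q_{\mathrm{DSQ}}$ and the $\tfrac{1}{\Delta}$ inside $k$, and (ii) clarifying the scope of the result. On the boundary points between adjacent $P_i$, $Q_{\mathrm{DSQ}}$ is continuous by the choice $s = 1/\tanh(k\Delta/2)$, but the derivative need only match in the limiting sense; since the lemma's hypothesis is $x \in P_i$ (interior of a quantization cell), this subtlety does not affect the statement, and the clipped regions $x<l$ or $x>u$ are excluded from the claim. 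I would mention this briefly for completeness but would not belabor it.
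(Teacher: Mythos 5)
Your proposal is correct and follows exactly the paper's own argument: differentiate the interior branch of $Q_{\mathrm{DSQ}}$ via the chain rule to get $\tfrac{\Delta}{2}\,s\,k\,\mathrm{sech}^2(k(x-m_i))$, then substitute $s=\tfrac{1}{1-\alpha}$ and $k=\tfrac{1}{\Delta}\ln\!\left(\tfrac{2-\alpha}{\alpha}\right)$ so the $\Delta$ factors cancel. Your additional remark about the boundary and clipped regions is a harmless clarification that the paper omits.
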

\begin{proof}
Using the definition of DSQ function (See~\eqref{eq:dsq-def}) and the fact that $\frac{d}{du}\tanh(u) = \mathrm{sech}^2(u)$, we obtain that for all $x \in P_i$,
\begin{equation*}
\frac{\partial Q_{\mathrm{DSQ}}(x)}{\partial x} = \frac{\Delta}{2} \cdot s \cdot k \cdot \mathrm{sech}^2(k(x - m_i)).
\end{equation*}
Substituting $s = \frac{1}{1-\alpha}$ and $k = \frac{1}{\Delta}\ln\left(\frac{2 - \alpha}{\alpha}\right)$ provided in~\eqref{eq:relation-s-k-alpha} yields the result~\eqref{eq:dsq-grad}, which completes the proof.
\end{proof}

% Mean Squared Error definition
% In this subsection, we will use mean squared error as a metric to compare three method including StableQAT, DSQ, and STE. Moreover, we provide an further comparison between StableQAT and DSQ in terms of maximum, expectation, and variance of their gradients, which provides a theoretical explanation on why StableQAT performs better than DSQ. We first introduce the definition of mean squared error as follows.
% \begin{definition}[Mean Squared Error]\label{def:mse}
% Let $x$ be a random variable with distribution $\Pcal$ with density function $p(x)$. Then, the mean squared error (MSE) between true function $g(x)$ and estimated function $\ghat(x)$ is
% \begin{equation*}
% \MSE(\ghat, g) = \E_{\xi \sim \Pcal}[(\ghat(x) - g(x))^2] = \int (g(x)-\ghat(x))^2p(x)dx.
% \end{equation*}
% \end{definition}

We first present several existing integral results, which will be used to derive the expectation and variance of the StableQAT and DSQ schemes, respectively.
\begin{lemma}\label{lem:useful-integrals}
The following integral identities hold:
\begin{enumerate}
\item[(i)] If $c^2 < 1$, then
\begin{equation*}
\int_{-\frac{\pi}{2}}^{\frac{\pi}{2}} \frac{1}{1+c\cos x} dx = \frac{4}{\sqrt{1-c^2}} \arctan\left(\sqrt{\frac{1-c}{1+c}}\right).
\end{equation*}
\item[(ii)] If $c^2 < 1$, then
\begin{equation*}
\int_{-\frac{\pi}{2}}^{\frac{\pi}{2}} \frac{1}{(1+c\cos x)^2} dx = \frac{4}{(1-c^2)^{3/2}}\arctan\left(\sqrt{\frac{1-c}{1+c}}\right) - \frac{2c}{1-c^2}.
\end{equation*}
\item[(iii)] Given any $c>0$, 
\begin{equation*}
\int_{-c}^{c} \text{sech}^4(x) dx = 2\left(\tanh(c) -\frac{\tanh^3(c)}{3}\right). 
\end{equation*}
\end{enumerate}
\end{lemma}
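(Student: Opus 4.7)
The plan is to prove the three identities in sequence, with parts (i) and (ii) being tightly linked through parameter differentiation in $c$, and part (iii) being independent and handled by a direct substitution.

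For part (i), I would apply the Weierstrass half-angle substitution $t=\tan(x/2)$, under which $\cos x = (1-t^2)/(1+t^2)$ and $dx = 2\,dt/(1+t^2)$, with the limits $x=\pm\pi/2$ mapping to $t=\pm 1$. The denominator $1+c\cos x$ collapses to $[(1+c)+(1-c)t^2]/(1+t^2)$, which cancels the Jacobian factor and reduces the integral to the rational form $\int_{-1}^{1} \frac{2\,dt}{(1+c)+(1-c)t^2}$. A scaling substitution $u=\sqrt{(1-c)/(1+c)}\,t$ then converts this into $\int \frac{du}{1+u^2}$, producing the $\arctan$-expression with the prefactor $4/\sqrt{1-c^2}$. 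The assumption $c^2<1$ ensures that $(1-c)/(1+c)>0$, so all square roots and substitutions are well defined.

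For part (ii), rather than attack the squared denominator head-on, I would use the simple algebraic decomposition
\begin{equation*}
\frac{1}{(1+c\cos x)^2} = \frac{1}{1+c\cos x} - \frac{c\cos x}{(1+c\cos x)^2},
\end{equation*}
combined with the identity $\frac{\partial}{\partial c}\frac{1}{1+c\cos x} = -\frac{\cos x}{(1+c\cos x)^2}$. Writing $I(c)$ and $J(c)$ for the integrals in (i) and (ii) respectively, and interchanging differentiation with the integral (valid since the integrand and its $c$-derivative are continuous and uniformly bounded on compact subsets of $c^2<1$), one obtains the simple reduction $J(c)=I(c)+c\,I'(c)$. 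Substituting the closed form from part~(i) and differentiating yields the stated expression after routine algebraic simplification, in which the $\arccos$ representation $\arccos(c)=2\arctan(\sqrt{(1-c)/(1+c)})$ keeps the formulas compact.

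For part (iii), I would use $\sech^{2}x=1-\tanh^{2}x$ to rewrite $\sech^{4}x=(1-\tanh^{2}x)\,\sech^{2}x$, and then substitute $u=\tanh x$, so that $du=\sech^{2}x\,dx$. The integral reduces to $\int_{-\tanh c}^{\tanh c}(1-u^{2})\,du=2(\tanh c - \tanh^{3}c/3)$, which is exactly the claimed value; the case $c>0$ ensures the limits are ordered correctly and the substitution is monotone.

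The main obstacle I anticipate is purely bookkeeping in part (ii): after differentiating $I(c)=2\arccos(c)/\sqrt{1-c^{2}}$ and forming $I(c)+cI'(c)$, the $\arccos$-terms must be combined over the common denominator $(1-c^{2})^{3/2}$, with the identity $(1-c^{2})+c^{2}=1$ being what collapses them into the compact form. Parts~(i) and~(iii) are short and mechanical; part~(ii) is mildly longer but follows deterministically once the reduction $J=I+cI'$ is set up.
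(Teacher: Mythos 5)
Your proof is correct, but it takes a genuinely more self-contained route than the paper for parts (i) and (ii). The paper simply cites antiderivatives from Gradshteyn--Ryzhik's table of integrals and evaluates them at the endpoints: for (i) the primitive $\frac{2}{\sqrt{1-c^2}}\arctan\bigl(\frac{(1-c)\tan(x/2)}{\sqrt{1-c^2}}\bigr)$, and for (ii) a more elaborate primitive containing a $\frac{c\sin x}{1+c\cos x}$ boundary term. You instead derive (i) from scratch via the Weierstrass substitution (which is effectively how the cited primitive is obtained, so the content is the same), and -- more substantively -- you obtain (ii) from (i) through the reduction $J(c)=I(c)+c\,I'(c)$ via differentiation under the integral sign, writing $I(c)=2\arccos(c)/\sqrt{1-c^2}$ and differentiating. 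I checked the algebra: $c\,I'(c)=-\tfrac{2c}{1-c^2}+\tfrac{2c^2\arccos c}{(1-c^2)^{3/2}}$, and combining with $I(c)$ over the denominator $(1-c^2)^{3/2}$ collapses the $\arccos$ terms exactly as you predict, recovering the stated formula. Your approach buys a citation-free, verifiable derivation in which (ii) never requires guessing or looking up a second primitive; the paper's approach is shorter on the page but outsources the hard step to a table. Part (iii) is identical in both ($u=\tanh x$, $\sech^4=(1-\tanh^2)\sech^2$). The only point worth making explicit in a final write-up is the justification for interchanging $\partial_c$ and the integral, which you already note (continuity and local uniform boundedness of the integrand and its $c$-derivative on compact subsets of $c^2<1$); that is sufficient.
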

\begin{proof}
% 2.553
(i) Using the result from~\cite{gradshteyn2007table} and the fact that $c^2< 1$, we have that 
\begin{equation*}
\int_{-\pi/2}^{\pi/2} \frac{1}{1+c\cos x} dx = \left. \frac{2}{\sqrt{1 - c^2}} \arctan \left(\frac{(1-c)\tan\left(\frac{x}{2}\right)}{\sqrt{1-c^2}}\right) \right|_{-\pi/2}^{\pi/2} = \frac{4}{\sqrt{1-c^2}} \arctan\left(\sqrt{\frac{1-c}{1+c}}\right),
\end{equation*}
as claimed. \\
% 2.554
(ii) It follows from~\cite{gradshteyn2007table} and the fact that $c^2 < 1$ that
\begin{equation*}
\begin{aligned}
\int_{-\pi/2}^{\pi/2} \frac{1}{(1+c\cos x)^2} dx &= -\frac{1}{1-c^2} \left\{\frac{c\sin(x)}{1 + c\cdot\cos(x)} - \frac{2}{\sqrt{1 - c^2}} \arctan \left(\frac{(1-c)\tan\left(\frac{x}{2}\right)}{\sqrt{1-c^2}}\right)\right\}\Bigg|_{-\pi/2}^{\pi/2} \\
&= \frac{4}{(1-c^2)^{3/2}}\arctan\left(\sqrt{\frac{1-c}{1+c}}\right) - \frac{2c}{1-c^2},
\end{aligned}
\end{equation*}
as claimed. \\
(iii) Let $u = \tanh(x)$ and this gives $du = \mathrm{sech}^2(x)\ dx$. It then follows from the fact that $\mathrm{sech}^4(x) = \mathrm{sech}^2(x) \cdot (1 - \tanh^2(x))$ that
\begin{equation*}
\int_{-c}^{c} \mathrm{sech}^4(x)\, dx = \int_{\tanh(-c)}^{\tanh(c)} (1 - u^2)\, du = \left(u - \frac{u^3}{3}\right) \Bigg|_{\tanh(-c)}^{\tanh(c)} = 2\left(\tanh(c) -\frac{\tanh^3(c)}{3}\right),
\end{equation*}
as claimed. 
\end{proof}
With these integral identities, we can now derive closed-form expressions for the statistical properties of each gradient estimator. The following lemma provides the expectation and variance under uniform sampling.
\begin{lemma}[Expectation and variance]\label{lem:exp-var}
The expectation and variance of the gradient estimators under DSQ and StableQAT schemes are given as follows.
\begin{enumerate}
% \item[\textbf{(STE)}]
% \begin{equation*}
% \begin{aligned}
% \E_{\xi \sim U(l,u)}[g_{\mathrm{STE}}(x)] = 1,
% \Var_{\xi \sim U(l,u)}[g_{\mathrm{STE}}(x)] = 0.
% \end{aligned}
% \end{equation*}
\item \textbf{(DSQ)}
\begin{equation*}
\begin{aligned}
\E_{\xi \sim U(l,u)}[g_{\mathrm{DSQ}}(x;\alpha)] = 1,
\Var_{\xi \sim U(l,u)}[g_{\mathrm{DSQ}}(x;\alpha)]= \frac{\ln\!\left(\frac{2-\alpha}{\alpha}\right)
\left(3-(1-\alpha)^2\right)}{6(1-\alpha)} - 1 .
\end{aligned}
\end{equation*}

\item \textbf{(StableQAT)}
Let $c = \sqrt{2}\pi A$ with $A \in \bigl(0,\frac{1}{\sqrt{2}\pi}\bigr)$. Then,
\begin{equation*}
\begin{aligned}
\E_{\xi \sim U(l,u)}[g_{\emph{\algacro{}}}(x;A)]
&= \frac{8}{\pi\sqrt{1-c^2}}
\arctan\!\left(\sqrt{\frac{1-c}{1+c}}\right) - 1, \\
\Var_{\xi \sim U(l,u)}[g_{\emph{\algacro{}}}(x;A)]
&= \frac{16}{\pi(1-c^2)^{3/2}}
\arctan\!\left(\sqrt{\frac{1-c}{1+c}}\right)
- \frac{8c}{\pi(1-c^2)} + 1
- \left(
\frac{8}{\pi\sqrt{1-c^2}}
\arctan\!\left(\sqrt{\frac{1-c}{1+c}}\right) - 1
\right)^2.
\end{aligned}
\end{equation*}
\end{enumerate}
\end{lemma}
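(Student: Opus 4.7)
My plan is to compute $\E[g]$ and $\E[g^2]$ for each surrogate directly by integrating against the uniform density, exploiting periodicity so that the average over $[l,u]$ reduces to a single-period integral, and then recover the variance via $\Var = \E[g^2] - (\E[g])^2$.

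For DSQ, I would use the fact that $\gdsq(\cdot;\alpha)$ has the same shape on every quantization interval $P_i$ (up to translation) and that $[l,u]$ is exactly tiled by $2^b-1$ such intervals since $\Delta = (u-l)/(2^b-1)$. This reduces both moments to integrals on a single $P_i$. Substituting $t = k(x - m_i)$ with $k = \tfrac{1}{\Delta}\ln\!\tfrac{2-\alpha}{\alpha}$ maps $P_i$ to $[-k\Delta/2,\, k\Delta/2]$ and converts them into integrals of $\sech^2(t)$ and $\sech^4(t)$. The first evaluates to $2\tanh(k\Delta/2) = 2(1-\alpha)$ by the parameterization identity~\eqref{eq:alpha-parameterize}, which collapses $\E[\gdsq]$ to $1$ after simplification. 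For the second, Lemma~\ref{lem:useful-integrals}(iii) provides the closed form; combining it with the prefactor $k^2\Delta^2/(4(1-\alpha)^2)$ and reapplying both $k\Delta = \ln\!\tfrac{2-\alpha}{\alpha}$ and $\tanh(k\Delta/2) = 1-\alpha$ yields $\E[\gdsq^2]$, from which the stated variance follows by subtraction.

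For \algacro{}, the key observation is that on each unit interval $[k-\tfrac12, k+\tfrac12]$ we have $x_q = k$, so $\cos(\pi(x+x_q)) = \cos(2k\pi + \pi(x-k)) = \cos(\pi(x-k))$ and the surrogate is periodic with period $1$ in $x$. Hence, assuming $[l,u]$ spans an integer number of periods (or passing to the long-range regime), both moments reduce to integrals over $[-\tfrac12, \tfrac12]$; the substitution $s = \pi(x-k)$ then sends the integration to $[-\tfrac{\pi}{2}, \tfrac{\pi}{2}]$. I would exploit the algebraic identity $\tfrac{1-c\cos s}{1+c\cos s} = \tfrac{2}{1+c\cos s} - 1$ to rewrite $\E[g]$ and $\E[g^2]$ as linear combinations of the two canonical integrals $\int (1+c\cos s)^{-1}\,ds$ and $\int (1+c\cos s)^{-2}\,ds$ on $[-\tfrac{\pi}{2}, \tfrac{\pi}{2}]$, whose closed forms are provided by Lemma~\ref{lem:useful-integrals}(i)--(ii). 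Subtracting the square of the expectation then produces the variance in the claimed form.

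The main obstacle I anticipate is algebraic bookkeeping rather than a genuine analytic hurdle. On the DSQ side, the interplay between the prefactor $k^2\Delta^2/(4(1-\alpha)^2)$, the Jacobian $1/k$, and the $\sech^4$ integral requires applying $k\Delta = \ln\!\tfrac{2-\alpha}{\alpha}$ and $\tanh(k\Delta/2) = 1-\alpha$ in the correct order to collapse the expression into the stated compact form. On the \algacro{} side, the factor $\arctan\!\left(\sqrt{(1-c)/(1+c)}\right)$ appears with different powers of $(1-c^2)$ across $\E[g]$ and $\E[g^2]$, and squaring introduces cross terms that must be tracked with care to match the claimed formula exactly; a sanity check in the singular limit $c \to 1^-$ using the identity $g \to \tan^2(s/2)$ provides a convenient cross-validation via Theorem~\ref{thm:asymptotic_dsq}.
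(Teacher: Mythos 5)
Your proposal is correct and follows essentially the same route as the paper's proof: tile $[l,u]$ by the $2^b-1$ quantization intervals, reduce both moments to a single-period integral by a change of variables, and evaluate the resulting $\sech^2/\sech^4$ and $(1+c\cos\theta)^{-1}/(1+c\cos\theta)^{-2}$ integrals via Lemma~\ref{lem:useful-integrals} together with the identities $k\Delta=\ln\tfrac{2-\alpha}{\alpha}$ and $\tanh\bigl(\tfrac12\ln\tfrac{2-\alpha}{\alpha}\bigr)=1-\alpha$, finishing with $\Var=\E[g^2]-(\E[g])^2$. The only cosmetic difference is that where you hedge about $[l,u]$ spanning an integer number of periods for \algacro{}, the paper makes this exact by pairing the two boundary half-intervals into one full period.
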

\begin{proof}
For notation brevity, we use $\E[\cdot]$ to denote $\E_{\xi \sim U(l, u)}[\cdot]$ and $\Var[\cdot]$ to denote $\Var_{\xi \sim U(l, u)}[\cdot]$. Let $\beta = \In(\frac{2-\alpha}{\alpha})$.
% \paragraph{STE Expectation + Variance.} Using the fact that $\gste(x) = 1$ for all $x \in [l,u]$, we have that
% \begin{equation*}
% \E[\gste(x)] = \int_{l}^u \gste(x) \cdot \frac{1}{u-l} dx = 1 \text{ and } \Var[\gste(x)] = 0.
% \end{equation*}

\paragraph{DSQ Expectation.} 
We have from Lemma~\ref{lemma:dsq_gradient}, change of variable trick, the fact that range $[l,u]$ has $2^b-1$ interval with length $\Delta$, $\frac{d}{dx}\tanh(x) = \text{sech}^2(x)$, and~\eqref{eq:alpha-identity} that 
\begin{equation}\label{eq:dsq-first-moment}
\begin{aligned}
\E[\gdsq(\xi;\alpha)] &= \sum_{i=0}^{2^b - 2}\int_{m_i - \frac{\Delta}{2}}^{m_i + \frac{\Delta}{2}} \frac{\beta}{2(1-\alpha)} \cdot \mathrm{sech}^2\left(\frac{\beta}{\Delta}(\xi - m_i)\right) \cdot \frac{1}{u-l} d\xi \\
&= \sum_{i=0}^{2^b - 2}\int_{-\frac{\Delta}{2}}^{\frac{\Delta}{2}} \frac{\beta}{2(1-\alpha)} \cdot \mathrm{sech}^2\left(\frac{\beta}{\Delta}\xi \right) \cdot \frac{1}{u-l} d\xi \\
&= \cdot \frac{2^b - 1}{u-l}\int_{-\frac{\Delta}{2}}^{\frac{\Delta}{2}} \frac{\beta}{2(1-\alpha)} \cdot \mathrm{sech}^2\left(\frac{\beta}{\Delta}\xi \right) d\xi \\
&= \frac{1}{\Delta} \int_{-\frac{\Delta}{2}}^{\frac{\Delta}{2}} \frac{\beta}{2(1-\alpha)} \cdot \mathrm{sech}^2\left(\frac{\beta}{\Delta}\xi\right) d\xi \\
&= \frac{1}{\Delta} \cdot \frac{\beta}{2(1-\alpha)} \cdot \frac{\Delta}{\beta} \cdot \left.\tanh\left(\frac{\beta}{\Delta} \xi\right)\right|_{-\frac{\Delta}{2}}^{\frac{\Delta}{2}} \\
&=  \frac{1}{1-\alpha} \mathrm{tanh}\left(\frac{\beta}{2}\right) = \frac{1}{1-\alpha} \mathrm{tanh}\left(\frac{1}{2}\ln\left(\frac{2-\alpha}{\alpha}\right)\right)=1.
\end{aligned}
\end{equation}
\paragraph{DSQ Variance.} To compute DSQ variance, one first need to compute the second order moment of random variable $\gdsq(\xi;\alpha)$. We have from Lemma~\ref{lemma:dsq_gradient}, change of variable trick, the fact that range $[l,u]$ has $2^b-1$ interval with length $\Delta$, Lemma~\ref{lem:useful-integrals}(iii), and~\eqref{eq:alpha-identity} that 
\begin{equation}\label{eq:dsq-second-moment}
\begin{aligned}
\E[\gdsq^2(\xi;\alpha)] &= \sum_{i=0}^{2^b - 2}\int_{m_i - \frac{\Delta}{2}}^{m_i + \frac{\Delta}{2}} \left(\frac{\ln\left(\frac{2-\alpha}{\alpha}\right)}{2(1-\alpha)}\right)^2 \cdot \sech^4\left(\frac{\ln\left(\frac{2-\alpha}{\alpha}\right)}{\Delta}(\xi - m_i)\right) \cdot \frac{1}{u-l} d\xi \\
&= \sum_{i=0}^{2^b-2} \int_{-\frac{\Delta}{2}}^{\frac{\Delta}{2}} \left(\frac{\beta}{2(1-\alpha)}\right)^2 \sech^4\left(\frac{\beta}{\Delta} \xi\right) d\xi \\
&= \frac{1}{\Delta} \int_{-\frac{\beta}{2}}^{\frac{\beta}{2}} \left(\frac{\beta}{2(1-\alpha)}\right)^2 \cdot  \frac{\Delta}{\beta} \cdot \sech^4(\xi) d\xi \\
&= \frac{\beta}{4(1-\alpha)^2} \int_{-\frac{\beta}{2}}^{\frac{\beta}{2}} \sech^4(\xi) d\xi \\
&= \frac{\beta}{4(1-\alpha)^2}\left(2\tanh\left(\frac{\beta}{2}\right) - \frac{2}{3}\tanh^3\left(\frac{\beta}{2}\right)\right) \\
&= \frac{\beta}{4(1-\alpha)^2}\left(2(1-\alpha) - \frac{2}{3}(1-\alpha)^3\right) = \frac{\ln\left(\frac{2-\alpha}{\alpha}\right) \cdot (3 - (1-\alpha)^2)}{6(1-\alpha)}.
\end{aligned}
\end{equation}
It then follows from the definition of variance,~\eqref{eq:dsq-first-moment} and~\eqref{eq:dsq-second-moment} that
\begin{equation*}
\Var[\gdsq(\xi;\alpha)] = \E[\gdsq^2(\xi;\alpha)] - (\E[\gdsq(\xi;\alpha)])^2 = \frac{\ln\!\left(\frac{2-\alpha}{\alpha}\right)
\left(3-(1-\alpha)^2\right)}{6(1-\alpha)} - 1.
\end{equation*}
For $N = 1$, the StableQAT gradient is given as for all $\xi\in[l, u]$,
\begin{equation}\label{eq:StableQAT-grad-modified}
g_{{\algacro{}}}(\xi) = \frac{1 - \sqrt{2}\pi A\cos\left(\pi\left(\frac{\xi}{\Delta} + \left\lfloor\frac{\xi}{\Delta}\right\rceil\right)\right)}{1 + \sqrt{2}\pi A\cos\left(\pi\left(\frac{\xi}{\Delta} + \left\lfloor\frac{\xi}{\Delta}\right\rceil\right)\right)} = \frac{1 - c\cdot\cos\left(\pi\left(\frac{\xi}{\Delta} - \left\lfloor\frac{\xi}{\Delta}\right\rceil\right)\right)}{1 + c\cdot \cos\left(\pi\left(\frac{\xi}{\Delta} - \left\lfloor\frac{\xi}{\Delta}\right\rceil\right)\right)}.
\end{equation}
\paragraph{StableQAT Expectation.} Using~\eqref{eq:StableQAT-grad-modified}, the change of variable trick ($\theta = \frac{x}{\Delta} - \left\lfloor\frac{x}{\Delta}\right\rceil$), the fact that range $[l,u]$ has $2^b-1$ interval with length $\Delta$, and Lemma~\ref{lem:useful-integrals}(i) that
\begin{equation}\label{eq:StableQAT-first-moment}
\begin{aligned}
\E[g_{{\algacro{}}}(\xi;A)] &= \int_{l}^u g_{{\algacro{}}}(\xi) \cdot \frac{1}{u-l} d\xi \\
&= \sum_{i=0}^{2^b-3} \int_{m_i}^{m_{i+1}} g_{{\algacro{}}}(\xi) \cdot \frac{1}{u-l} d\xi + \int_{l}^{l+\frac{\Delta}{2}} g_{{\algacro{}}}(\xi) \cdot \frac{1}{u-l} d\xi + \int_{u-\frac{\Delta}{2}}^{u} g_{{\algacro{}}}(\xi) \cdot \frac{1}{u-l} d\xi \\
&= \sum_{i=0}^{2^b-3} \int_{-\frac{\pi}{2}}^{\frac{\pi}{2}} \frac{1- c\cdot \cos(\theta)}{1+ c\cdot \cos(\theta)} \cdot \frac{\Delta}{\pi(u-l)} d\theta + \int_{0}^{\frac{\pi}{2}} \frac{1- c\cdot \cos(\theta)}{1+ c\cdot \cos(\theta)} \cdot \frac{\Delta}{\pi(u-l)} d\theta + \int_{-\frac{\pi}{2}}^{0} \frac{1- c\cdot \cos(\theta)}{1+ c\cdot \cos(\theta)} \cdot \frac{\Delta}{\pi(u-l)} d\theta \\
&= \frac{(2^b - 1)\Delta}{u-l} \cdot \frac{1}{\pi} \int_{-\frac{\pi}{2}}^{\frac{\pi}{2}} \frac{1- c\cdot \cos(\theta)}{1+ c\cdot \cos(\theta)} d\theta \\
&= \frac{1}{\pi} \int_{-\frac{\pi}{2}}^{\frac{\pi}{2}} \frac{1- c\cdot \cos(\theta)}{1+ c\cdot \cos(\theta)} d\theta \\
&= \frac{1}{\pi} \int_{-\frac{\pi}{2}}^{\frac{\pi}{2}} \left(\frac{2}{1 + c\cos\theta} - 1\right) d\theta \\
&= \frac{8}{\pi\sqrt{1-c^2}} \arctan\left(\sqrt{\frac{1-c}{1+c}}\right) - 1. 
\end{aligned}
\end{equation}
\paragraph{StableQAT Variance.} Using the change of variable trick ($\theta = \frac{\xi}{\Delta} - \left\lfloor\frac{\xi}{\Delta}\right\rceil$), the fact that range $[l,u]$ has $2^b-1$ interval with length $\Delta$, and Lemma~\ref{lem:useful-integrals}(i)-(ii) that
\begin{equation}\label{eq:StableQAT-second-moment}
\begin{aligned}
 & \ \E[g_{{\algacro{}}}^2(\xi;A)] \\
=& \ \int_{l}^u g_{{\algacro{}}}^2(\xi) \cdot \frac{1}{u-l} d\xi \\
=& \ \sum_{i=0}^{2^b-3} \int_{m_i}^{m_{i+1}} g_{{\algacro{}}}^2(\xi) \cdot \frac{1}{u-l} d\xi + \int_{l}^{l+\frac{\Delta}{2}} g_{{\algacro{}}}^2(\xi) \cdot \frac{1}{u-l} d\xi + \int_{u-\frac{\Delta}{2}}^{u} g_{{\algacro{}}}^2(\xi) \cdot \frac{1}{u-l} d\xi \\
=& \ \sum_{i=0}^{2^b-3} \int_{-\frac{\pi}{2}}^{\frac{\pi}{2}} \left(\frac{1- c\cdot \cos(\theta)}{1+ c\cdot \cos(\theta)}\right)^2 \cdot \frac{\Delta}{\pi(u-l)} d\theta + \int_{0}^{\frac{\pi}{2}} \left(\frac{1- c\cdot \cos(\theta)}{1+ c\cdot \cos(\theta)}\right)^2 \cdot \frac{\Delta}{\pi(u-l)} d\theta + \int_{-\frac{\pi}{2}}^{0} \left(\frac{1- c\cdot \cos(\theta)}{1+ c\cdot \cos(\theta)}\right)^2 \cdot \frac{\Delta}{\pi(u-l)} d\theta \\
=& \ \frac{(2^b - 1)\Delta}{\pi(u-l)} \int_{-\frac{\pi}{2}}^{\frac{\pi}{2}} \left(\frac{1- c\cdot \cos(\theta)}{1+ c\cdot \cos(\theta)}\right)^2 d\theta \\
=& \ \frac{1}{\pi} \int_{-\frac{\pi}{2}}^{\frac{\pi}{2}} \left(\frac{1- c\cdot \cos(\theta)}{1+ c\cdot \cos(\theta)}\right)^2 d\theta \\
=& \ \frac{1}{\pi} \int_{-\frac{\pi}{2}}^{\frac{\pi}{2}} \frac{4}{(1 + c \cdot \cos\theta)^2} - \frac{4}{1 + c \cdot \cos\theta} + 1 \ d\theta \\
=& \ \frac{16}{\pi(1-c^2)^{3/2}}
\arctan\!\left(\sqrt{\frac{1-c}{1+c}}\right)
- \frac{8c}{\pi(1-c^2)} + 1 - \frac{16}{\pi\sqrt{1-c^2}}
\arctan\!\left(\sqrt{\frac{1-c}{1+c}}\right).
\end{aligned}
\end{equation}
It then follows from~\eqref{eq:StableQAT-first-moment} and~\eqref{eq:StableQAT-second-moment} that 
\begin{equation*}
\begin{aligned}
\Var[g_{{\algacro{}}}(\xi;A)] &= \E[g_{{\algacro{}}}^2(\xi;A)] - \left(\E[g_{{\algacro{}}}(\xi;A)]\right)^2 \\
&= \frac{16c^2}{\pi(1-c^2)^{3/2}}
\arctan\!\left(\sqrt{\frac{1-c}{1+c}}\right)
- \frac{8c}{\pi(1-c^2)} + 1
- \left(
\frac{8}{\pi\sqrt{1-c^2}}
\arctan\!\left(\sqrt{\frac{1-c}{1+c}}\right) - 1
\right)^2. 
\end{aligned}
\end{equation*}
\end{proof}

Now, we are ready to prove Theorem~\ref{thm:asymptotic_dsq}.
\begin{proof}[Proof of Theorem~\ref{thm:asymptotic_dsq}]
We establish each limit separately, drawing upon the closed-form expressions derived in Lemma~\ref{lem:exp-var}. Throughout, we denote $c = \sqrt{2}\pi A$ and note that as $A \to \left(\frac{1}{\sqrt{2}\pi}\right)^-$, we have $c \to 1^{-}$.

\textbf{DSQ expectation limit.}
From Lemma~\ref{lem:exp-var}, the expected value of the DSQ gradient satisfies
$\E_{\xi \sim U(l,u)}[\gdsq(\xi)] = 1$ for all $\alpha \in (0,1)$. Since this identity is independent of $\alpha$, it follows that
\begin{equation*}
\lim_{\alpha \to 0^+} \E[\gdsq(\xi;\alpha)] = 1.
\end{equation*}

\paragraph{StableQAT expectation limit.} From Lemma~\ref{lem:exp-var}, the expectation of the StableQAT gradient is given by
\begin{equation*}
\E_{\xi \sim U(l,u)}[g_{{\algacro{}}}(\xi;A)] = \frac{8}{\pi\sqrt{1-c^2}} \arctan\left(\sqrt{\frac{1-c}{1+c}}\right) - 1.
\end{equation*}
It then follows from the fact that $\lim_{x \to 0} \frac{\arctan(x)}{x} = 1$ that
\begin{equation}\label{eq:T2c-limit}
\begin{aligned}
\lim_{A \to \left(\frac{1}{\sqrt{2}\pi}\right)^-} \E[g_{{\algacro{}}}(x)] &= \lim_{c \to 1^-} \frac{8}{\pi\sqrt{1-c^2}} \arctan\left(\sqrt{\frac{1-c}{1+c}}\right) - 1 \\
&= \lim_{c \to 1^{-}} \frac{\arctan\left(\sqrt{\frac{1-c}{1+c}}\right)}{\left(\sqrt{\frac{1-c}{1+c}}\right)} \cdot \frac{8}{\pi\sqrt{1-c^2}} - 1 = \frac{4}{\pi} - 1.
\end{aligned}
\end{equation}

\paragraph{DSQ variance limit.}
From Lemma~\ref{lem:exp-var}, the variance of the DSQ gradient is given by
\begin{equation}\label{eq:dsq-variance}
\Var_{\xi \sim U(l,u)}[\gdsq(\xi;\alpha)] = \frac{\ln\left(\frac{2-\alpha}{\alpha}\right)\bigl(3 - (1-\alpha)^2\bigr)}{6(1-\alpha)} - 1.
\end{equation}
We first analyze the asymptotic behavior of each factor as $\alpha \to 0^{+}$ as follows.
\begin{equation}\label{eq:useful-limits}
\begin{aligned}
\lim_{\alpha \to 0^+}\ln\left(\frac{2-\alpha}{\alpha}\right) = +\infty, \lim_{\alpha \to 0^+} (1-\alpha) = 1, \lim_{\alpha \to 0^+} (3 - (1-\alpha)^2) = 2.
\end{aligned}
\end{equation}
Combining~\eqref{eq:dsq-variance} and~\eqref{eq:useful-limits} allows us to have that
\begin{equation*}
\lim_{\alpha \to 0^{+}} \Var[\gdsq(\xi;\alpha)] = \lim_{\alpha \to 0^{+}} \frac{\ln\left(\frac{2-\alpha}{\alpha}\right)\bigl(3 - (1-\alpha)^2\bigr)}{6(1-\alpha)} - 1= +\infty.
\end{equation*}
\paragraph{StableQAT variance limit.} From Lemma~\ref{lem:exp-var}, the variance of the StableQAT gradient is given by 
\begin{equation*}
\Var_{\xi \sim U(l,u)}[g_{{\algacro{}}}(\xi;A)] = \underbrace{\frac{16c^2}{\pi(1-c^2)^{3/2}}
\arctan\!\left(\sqrt{\frac{1-c}{1+c}}\right)
- \frac{8c}{\pi(1-c^2)}}_{T_1(c)} + 1
- \underbrace{\left(
\frac{8}{\pi\sqrt{1-c^2}}
\arctan\!\left(\sqrt{\frac{1-c}{1+c}}\right) - 1
\right)^2}_{T_2(c)}. 
\end{equation*}
We note that $\lim_{c \to 1^-} T_2(c)$ has been derived in~\eqref{eq:T2c-limit}. To this end, it remains to compute $\lim_{c \to 1^-} T_1(c)$. Let $u^2 = \frac{1-c}{1+c}$ and this implies that
\begin{equation}\label{eq:c-u-relation}
c = \frac{1-u^2}{1+u^2} \text{ and } 1-c^2 = \frac{4u^2}{(1+u^2)^2}.
\end{equation}
Then, using~\eqref{eq:c-u-relation} and Taylor series expansion of function $\arctan(x)$ at $x= 0$, we have that
\begin{equation}\label{eq:T1c-limit}
\begin{aligned}
\lim_{c \to 1^-} T_1(c) =\;& \lim_{u \to 0^+} \frac{16(1-u^2)^2}{\pi\,8u^3}\arctan(u)
-
\frac{8\left(\frac{1-u^2}{1+u^2}\right)}{\pi\left(\tfrac{4u^2}{(1+u^2)^2}\right)} \\
=\;&
\lim_{u \to 0^+}
\frac{16(1-u^2)^2(1+u^2)}{\pi\,8u^3}
\left(u - \frac{u^3}{3} + O(u^5)\right)
-
\frac{8(1-u^2)(1+u^2)}{4\pi u^2} \\
=\;&
\lim_{u \to 0^+}
\frac{16(1-u^2)^2(1+u^2)u}{24\pi}
-
\frac{8(1-u^2)(1+u^2)}{4\pi u^2} \\
=\;&
\lim_{u \to 0^+}
\frac{2(1-u^2-u^4+u^6)}{3\pi} - \frac{2(1-2u^2+u^4)}{\pi u^2} \\
=\;&
\lim_{u \to 0^+}
\left(-\frac{8}{3\pi} + O(u^2)\right) = -\frac{8}{3\pi}.
\end{aligned}
\end{equation}
Combining~\eqref{eq:T2c-limit} and~\eqref{eq:T1c-limit} allows us to have that
\begin{equation*}
\lim_{A \to \left(\frac{1}{\sqrt{2}\pi}\right)^-}
\Var_{\xi \sim U(l,u)}\!\left[g_{{\algacro{}}}(\xi;A)\right] = \lim_{c \to 1^-} T_1(c) + 1 - T_2(c) = \frac{16}{3\pi} - \frac{16}{\pi^2}.
\end{equation*}
\end{proof}

\end{document}